\newif\ifarxiv
\newcommand{\splittableref}[2]{%
\ifarxiv\ref{#1}\else#2\fi%
}
\newcommand{\resizearxiv}[1]{%
\ifarxiv#1\else\resizebox{\textwidth}{!}{#1}\fi%
}
\theoremstyle{plain}
\newtheorem{theorem}{Theorem}
\newtheorem{lemma}{Lemma}
\newtheorem{corollary}{Corollary}
\theoremstyle{definition}
\newtheorem{definition}{Definition}
\theoremstyle{remark}
\newtheorem{remark}{Remark}
\newtheorem{example}{Example}
\newtheorem{question}{Question}
\definecolor{mydarkblue}{rgb}{0,0.08,0.45}
\newcommand{\citep}{\cite}
\newcommand{\citet}{\cite}
\newcommand{\stateSpace}{\mathcal{X}}
\newcommand{\constraintSet}{\stateSpace_\mathrm{C}}
\newcommand{\viabilityKernel}{\stateSpace_\mathrm{V}}
\newcommand{\actionSpace}{\mathcal{A}}
\newcommand{\stateActionSpace}{\mathcal{Q}}
\newcommand{\viableSet}{\stateActionSpace_\mathrm{V}}
\newcommand{\unviableSet}{\stateActionSpace_\mathrm{U}}
\newcommand{\criticalSet}{\stateActionSpace_\mathrm{crit}}
\newcommand{\policies}{\Pi}
\newcommand{\viablePolicies}{\Pi_\mathrm{V}}
\newcommand{\return}{G}
\newcommand{\discountedEntropy}{S}
\newcommand{\entropy}{\mathcal{H}}
\newcommand{\risk}{\rho}
\newcommand{\maxEntQValue}{Q_{\mathrm{ent}}}
\newcommand{\constrainedQValue}[1]{Q_{#1}}
\newcommand{\penalizedQValue}[2]{Q_{#1,#2}}
\newcommand{\maxEntPolicy}{\pi^\star_{\mathrm{ent}}}
\newcommand{\constrainedPolicy}[1]{\pi^\star_{#1}}
\newcommand{\penalizedPolicy}[2]{\pi^\star_{#1,#2}}
\newcommand{\optimalPenalty}{p^\star}
\newcommand{\robSymbol}{\discountedEntropy}
\newcommand{\leqRobust}{\preceq}
\newcommand{\tc}{{T_\mathrm{C}}}
\newcommand{\tr}{{T_\mathrm{R}}}
\newcommand{\R}{\mathbb{R}}
\newcommand{\Rp}{\mathbb{R}_{>0}}
\newcommand{\Rnn}{\mathbb{R}_{\geq0}}
\newcommand{\N}{\mathbb{N}}
\newcommand{\Np}{\mathbb{N}_{>0}}
\newcommand{\E}{\mathbb{E}}
\newcommand{\Pbb}{\mathbb{P}}
\newcommand{\goesto}[1]{\xrightarrow[#1]{}}
\newcommand{\softmax}{\mathrm{softmax}}
\newacro{RL}[RL]{reinforcement learning}
\newacro{SAC}[SAC]{soft actor-critic}
\newacro{LHS}[LHS]{left-hand side}
\newcommand{\ie}{i\/.\/e\/.,\/~}
\begin{document}

\title{Viability of Future Actions: \texorpdfstring{\\}{}
Robust Safety in Reinforcement Learning via Entropy Regularization
}
\titlerunning{Robust safety in RL via Entropy Regularization}
\ifarxiv
\author{
    Pierre-François Massiani\textsuperscript{1,*}
    \And
    Alexander von Rohr\textsuperscript{1,2,*}\\
    \And
    Lukas Haverbeck\textsuperscript{1}
    \And
    Sebastian Trimpe\textsuperscript{1}
    \AND \phantom{a}\\\textsuperscript{*} Equal contribution\\
        \textsuperscript{1} Institute for Data Science in Mechanical Engineering, RWTH Aachen University, Germany\\
        \textsuperscript{2} Learning Systems and Robotics Lab, Technical University of Munich, Germany
        \\\texttt{\{massiani,lukas.haverbeck,trimpe\}@dsme.rwth-aachen.de}\\
        \texttt{alex.von.rohr@tum.de}
}
\else
\author{
Pierre-Fran\c{c}ois Massiani\inst{*,1}
\and Alexander {von Rohr}\inst{*,1,2}
\and Lukas Haverbeck\inst{1}
\and Sebastian Trimpe\inst{1}
}
\authorrunning{P.-F. Massiani, A. von Rohr et al.}

\institute{
Institute for Data Science in Mechanical Engineering, RWTH Aachen University, Germany
\email{\{massiani,lukas.haverbeck,trimpe\}@dsme.rwth-aachen.de}
\and
Learning Systems and Robotics Lab, Technical University of Munich, Germany
\email{alex.von.rohr@tum.de}\\
\textsuperscript{*} Equal contribution.
}
\fi
\maketitle
\begingroup
\renewcommand{\thefootnote}{}
\footnotetext{Accepted for publication at ECML-PKDD 2025.}
\endgroup

\begin{abstract}
Despite the many recent advances in reinforcement learning (RL), the question of learning policies that robustly satisfy state constraints under unknown disturbances remains open.
In this paper, we offer a new perspective on achieving robust safety by analyzing the interplay between two well-established techniques in model-free RL: entropy regularization, and constraints penalization. 
We reveal empirically that entropy regularization in constrained RL inherently biases learning toward maximizing the number of future viable actions, thereby promoting constraints satisfaction robust to action noise.
Furthermore, we show that by relaxing strict safety constraints through penalties, the constrained RL problem can be approximated arbitrarily closely by an unconstrained one and thus solved using standard model-free RL.
This reformulation preserves both safety and optimality while empirically improving resilience to disturbances.
Our results indicate that the connection between entropy regularization and robustness is a promising avenue for further empirical and theoretical investigation, as it enables robust safety in RL through simple reward shaping.

\end{abstract}
\section{Introduction}

Safety is the ability of a policy to keep the system away from a failure set of undesirable states.
Robustness extends the notion to adversarial or noisy settings; robust policies remain outside of the failure set in spite of the noise or adversary.
While robust \ac{RL} may be formulated as a constrained optimization problem\,\citep{moos2022robust}, there is a strong appeal in achieving robustly safe policies through reward shaping alone, given the numerous algorithms available for unconstrained \ac{RL}.
The purpose of this work is to reveal how robustly safe policies arise naturally from two common practices in \ac{RL}; namely, maximum-entropy \ac{RL} \citep{HZA2018} and failure penalization \citep{MHST2022}.
Our results support that the maximum-entropy \ac{RL} objective together with failure penalties enable safe operation at testing under action noise stronger than that seen at training; a property we call \emph{robustness}.
\par

\begin{figure*}[t]
    \centering
    \resizearxiv{\includegraphics{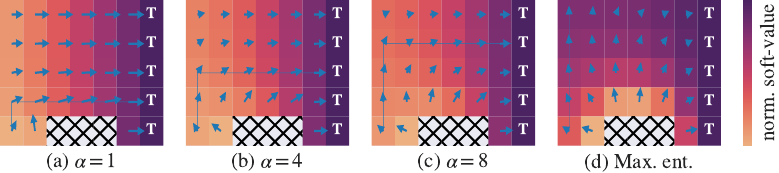}}
    \caption{\textbf{Fenced cliff --- Robustness as a function of $\alpha$:} An entropy-regularized policy avoids states with fewer actions available (d).
    The degree is controlled by the temperature parameter $\alpha$. As it increases (a--c), the policy moves away from the constraints, getting more robust but taking longer to reach the target.
    The mode of the policy is shown as a thin blue line.}
    \label{fig:fenced_cliff}
\end{figure*}

Since ``robustness'' is a highly overloaded term in \ac{RL}, we emphasize the notion of robustness in this paper differs from those of previous studies\,\citep{haarnoja2018composable,haarnoja2019learning,eysenbach2022maximum}.
Indeed, they guarantee that entropy regularization preserves a high return under changes in the reward or dynamics.
In other words, the \emph{return} is robust to such changes.
In contrast, we want that \emph{safety constraints are still satisfied} under changes in the dynamics (namely, the level of action noise).
These two types of objectives are complementary since safety and optimality are generally separate concerns in optimal control, where the goal is to act optimally while abiding by safety constraints.
Similarly, the term ``entropy-regularized \ac{RL}'' is used in the literature to refer to various formulations of regularized MDPs \cite{geist2019theory}. In this paper, we use it specifically to refer to methods that optimize the soft \ac{RL} objective \eqref{eq:OCP} as in \citet{HB2019}, which are often referred to as maximum-entropy \ac{RL}. From here on, we reserve the term ``maximum-entropy'' for the special case where the reward is identically zero and the agent solely maximizes entropy.
\par
Our approach builds on two key contributions. First, we empirically show that entropy regularization in a \emph{constrained} environment induces robustness to action noise.
We do this by first showing that agents optimizing the entropy-regularized objective may sacrifice reward to avoid constraints boundaries, with the degree of avoidance modulated by the temperature parameter.
This is illustrated in Figure~\ref{fig:fenced_cliff}, where higher temperatures yield policies whose mode stays farther from constraints.
Then, we provide empirical evidence that this constraints avoidance translates to robustness to action noise, \ie policies preserving the long-term number of viable actions are generally more robust.
\par
This general behavior aligns with the viability-based notion of robustness (called ``safety'' therein) introduced by \citet{HB2019,HRTB2020} (cf.\ \citep{aubin2011viability}), where the robustness of a state is quantified by the number of viable actions --- those that allow indefinite constraint satisfaction. We interpret the cumulative discounted entropy of a policy as a proxy for the long-term number of safe actions it considers, and thus entropy regularization naturally encourages avoidance of states with few viable options. 
\par 
Our second contribution shows that this constrained setting can be approximated arbitrarily well using failure penalties. For penalties above a finite threshold, the \emph{mode} of the resulting policy matches that of the constrained problem, offering a practical reward-shaping strategy for learning robustly safe policies.
\par
Making this observation relevant to state-of-the-art \ac{RL} requires relating the unconstrained and state-constrained optimization problems, as most applications focus on the former \citep{BGHY2022}.
This is the role of \emph{constraints penalization} or, more concisely, of penalties.
In the absence of entropy regularization, they are known to make constraints violations suboptimal, and large enough penalties guarantee that policies optimal for the penalized problem are also optimal for the constrained problem \citep{MHST2022}.
We add entropy regularization to this analysis.
Furthermore, penalties above a \emph{finite} threshold recover the mode of the constrained policy.
\par
Our observations emphasize a benefit of entropy regularization that differs from what is commonly mentioned in the literature.
Indeed, algorithms such as \ac{SAC} \citep{HZA2018} are often praised for their excellent exploration and their robustness to the choice of hyperparameters \citep{HZA2018}.
Although crucial in practice, these strengths are relevant \emph{during} learning.
In contrast, we focus on the optimal policy, that is, on what occurs \emph{after} successful learning.

\paragraph{Contributions}
We reveal how robust optimal controllers arise from the combination of entropy regularization together with sufficient constraints penalties.
Specifically: \begin{compactenum}
    \item We identify empirically that constraints repel trajectories of optimal controllers in the presence of entropy regularization, by favoring controllers maximizing the number of future viable actions.
    \item We prove that failure penalties approximate this constrained problem arbitrarily closely.
    \item Finally, we show that we can extract a safe policy from the optimal solution to the penalized problem and demonstrate that this policy is robustly safe.
\end{compactenum}
The first contribution strongly supports that the mode of entropy-regularized policies is robust to action noise, as the most-likely trajectory is ``repelled'' by the constraints.
We confirm robustness to action noise empirically, and further theoretical investigation is a promising avenue for future work.
Together, our results enable achieving reward-shaping-based robustness, and a novel interpretation of the temperature coefficient in the presence of constraints as a tunable robustness parameter.
\par
The article is organized as follows.
We discuss other approaches to robustness in \ac{RL} in Section~\ref{pos:related work}.
We then expose necessary preliminaries in Section~\ref{pos:preliminaries}, and formalize the problem we consider in Section~\ref{pos:problem formulation}.
Section~\ref{pos:theoretical results} contains our theoretical results, with first a high-level interpretation of the constrained, entropy-regularized problem, and our main theorem guaranteeing approximation with penalties.
The empirical evidence on robustness follows in Section~\ref{pos:empirical results}, together with further empirical validation of our theoretical results.
\ifarxiv\else\par
A complete version of this paper with its appendix is available at this address:
    \url{www.tobe.done}
\fi

\section{Related work} \label{pos:related work}

\paragraph{Viability and safety in RL}
There is a variety of definitions of safety in \ac{RL}\,\citep{BGHY2022}.
We consider the case of avoiding state constraints with certainty (level 3 in\,\citep{BGHY2022}).
Such a definition of safety falls into the general problem of viability\,\citep{aubin2011viability}.
Many specialized algorithms were developed to solve this safe \ac{RL} problem, both model-free and model-based\,\citep{achiam2017constrained}.
It has been shown in \citep{MHST2022} that sufficient failure penalties enforce equivalence between the unconstrained and safety-constrained problems, making safe \ac{RL} amenable to unconstrained algorithms.
This idea falls in the class of \emph{penalty methods}, a general idea in optimization which has been studied in the context of optimal control \citep{KM2000,XW1989}; a reformulation of the results of \cite{MHST2022} is that the discounted risk is an exact penalty function.
Our results show that it is no longer the case for entropy-regularized \ac{RL}, as no finite penalty exactly recovers the constrained problem.
Yet, we show it can be approximated arbitrarily closely.
Regardless, the above works only guarantee safety and neglect robustness.
We extend the analysis and proof methods of \citet{MHST2022} to entropy-regularized \ac{RL}, which naturally yields robustness in addition to safety.

\paragraph{Robustness in optimal control}
Robustness is a well-studied topic in optimal control\,\citep{ZDG1996} and consists of preserving viability despite \emph{model uncertainties}.
Classical approaches consist of robust model predictive control\,\citep{GP2017,LAR2009} and Hamilton-Jacobi reachability analysis\,\citep{BCHT2017}.
They provide \emph{worst-case} guarantees, mainly through constraints tightening.
The robustness of entropy-regularized controllers does not fit directly in this category, as their full support makes them explore the whole viability kernel.
Instead, they seem to exhibit a form of ``expected'' constraints tightening, which translates into robustness to action noise of the mode, as we illustrate empirically.
Finally, alternative methods such as scenario optimization\,\citep{calafiore2006scenario} address quantitative uncertainty instead of worst-case, but the connection to the robustness discussed in this article is still open.

\paragraph{Robustness in RL} Achieving robustness for RL policies is an active area of research\,\citep{moos2022robust}. 
A common formalization is that of a two-player game between the agent and an adversary\,\citep{morimoto2005robust,pinto17robust}. 
This setup is akin to that of Hamilton-Jacobi reachability analysis, only with a discounted cost.
These approaches achieve robustness through an adversary controlling, for instance, disturbances \citep{pinto17robust} or action noise \citep{tessler19actionrobust}, yielding worst-case robustness. 
However, such adversarially-robust \ac{RL} requires specialized algorithms and training the adversary. 
In contrast, entropy-regularized \ac{RL} is a popular framework with many standard implementations, which, as we show, also yields robustness solely through reward shaping. 
\par
The work of \citet{HRTB2020} introduces a state-dependent safety measure based on the number of viable actions available in each state. Our work extends this notion to robust safety of policies. A detailed discussion on the connection with the safety measure therein is in Appendix \splittableref{apdx:counterexample off policy metric}{C}.
\par
We are not the first to report that entropy-regularization leads to robustness. 
Some empirical \citep{haarnoja2018composable,haarnoja2019learning} and theoretical works \citep{eysenbach2022maximum} highlight the inherent robustness of entropy-regularized \ac{RL}. 
As mentioned above, however, their definition of robustness differs:
\citet{eysenbach2022maximum} consider robustness of the return to changes in the dynamics, whereas we are interested in preserving constraints satisfaction.

The observation that action noise during training can lead to more robust behavior was already made in \cite[Example 6.6]{SB2018} on the famous cliff walking grid-world. There, $\varepsilon$-greedy action selection resulted in more robust behavior for the case of on-policy learning (SARSA), whereas Q-learning (an off-policy method) learns to the optimal, non-robust, policy. We take the same example in Fig.~\ref{fig:fenced_cliff} and observe that entropy regularization leads to robust behavior in off-policy RL.
Similarly, the $G$-learning algorithm exhibits the same robust behavior on the cliff environment \cite{fox2016taming}. 
Our results and interpretation provide a general explanation for this observation.


\section{Preliminaries} \label{pos:preliminaries}
We introduce concepts to frame the optimization problems and their constraints.
In particular, we address entropy-regularized \ac{RL} and viability.

\subsection{Entropy-regularized RL}
We consider finite sets $\stateSpace$ and $\actionSpace$ called the state and action spaces, respectively, and deterministic dynamics $f:\stateActionSpace\to\stateSpace$, where $\stateActionSpace = \stateSpace\times\actionSpace$ is the state-action space.
A policy $\pi:\stateActionSpace\to[0,1]$ is a map whose partial evaluation in any $x\in\stateSpace$ is a probability mass function on $\actionSpace$; we write $\pi(\cdot\mid x)$, and $\policies$ is the set of all policies.
The state at time $t\in\N$ from initial state $x\in\stateSpace$ and following $\pi\in\policies$ is $X(t;x,\pi)$, and the action taken by $\pi$ at that time is $A(t;x,\pi)$.
If the policy and initial state are unambiguous, we simply write $X_t$ and $A_t$.

We also consider $r:\stateActionSpace\to\R$ a bounded reward function.
The return of $\pi\in\policies$ from initial state $x\in\stateSpace$ is then \begin{equation}\label{eq:cumulative entropy}
    \return(x,\pi) = \sum_{t=0}^\infty \gamma^t r(X_t, A_t),
\end{equation}
where $\gamma\in(0,1)$ is the discount factor.
A smaller $\gamma$ disregards delayed rewards, but can be overcome if the said rewards have large magnitude.
The expected return is $\bar \return(x,\pi) = \E[G(x,\pi)]$.
With $\entropy$ as the entropy, we introduce the discounted cumulative entropy of $\pi\in\policies$ from $x\in\stateSpace$ as \begin{equation}
    \discountedEntropy(x,\pi) = \sum_{t=0}^\infty \gamma^t\entropy(\pi(\cdot\mid X_t)),
\end{equation}
and its expectation $\bar \discountedEntropy(x,\pi) = \E[\discountedEntropy(x,\pi)]$.
The objective of entropy-regularized \ac{RL} is then to find an optimal policy, that is, a policy $\pi_\mathrm{opt}\in\policies$ such that \begin{equation}\label{eq:OCP}
    \pi_\mathrm{opt}\in\arg\max_{\pi\in\policies}\bar \return(x,\pi) + \alpha \bar\discountedEntropy(x,\pi),\quad\forall x\in\stateSpace,
\end{equation}
where $\alpha\in\Rnn$ is a design parameter called the \emph{temperature}.
It is known that there exists an optimal policy \citep{HZA2018}.
Specifically, one can be computed by leveraging the optimal soft-$Q$-value function $q:\stateActionSpace\to\R$, which satisfies for all $(x,a)\in\stateActionSpace$ \citep{NNXS2017}: \begin{equation}
    q(x,a) = r(x,a) + \gamma\alpha\ln\left[\sum_{b\in\actionSpace} \exp\left(\frac1\alpha q(x^\prime, b)\right)\right],
\end{equation}
where we defined the shorthand $x^\prime = f(x,a)$.
An equivalent definition is \citep[Theorem\,16]{NNXS2017} \begin{equation}
    q(x,a) = \max_{\pi\in\policies} r(x,a) + \gamma\bar\return(x^\prime,\pi) + \alpha\gamma \bar \discountedEntropy(x^\prime,\pi).
\end{equation}
Once $q$ is known, the softmax policy solves \eqref{eq:OCP}: \begin{equation}\label{eq:soft optimal policy}
    \pi_\mathrm{opt}(a\mid x) 
        = \softmax\left[\frac1\alpha q(x,\cdot)\right](a) 
        \quad\forall (x,a)\in\stateActionSpace.
\end{equation}
Finally, for any policy $\pi\in\policies$, its \emph{mode} is the policy \begin{equation}\label{eq:mode}
    \hat\pi(a\mid x) = \frac{1}{\lvert\arg\max\pi(\cdot\mid x)\rvert}\delta_{\arg\max\pi(\cdot\mid x)}(a),
\end{equation}
where $\lvert A\rvert$ is the cardinality and $\delta_{A}(a)$ is the indicator function of a set $A\subset\actionSpace$.

\subsection{Viability}
We consider a set of failure states $\constraintSet\subset\stateSpace$ that the system should never visit.
Avoiding $\constraintSet$ is a dynamic concern, and some states that are not in $\constraintSet$ themselves may still lead there inevitably.
We address this through viability theory\,\citep[Chapter~2]{aubin2011viability}.
\begin{definition}[Viability kernel]
    The viability kernel $\viabilityKernel$ is the set of states from where $\constraintSet$ can be avoided at all times almost surely: \begin{equation*}
        \viabilityKernel = \{x\in\stateSpace\mid \exists \pi\in\Pi,\,\forall t\in\Np, \Pbb[X_t\notin\constraintSet] = 1\}.
    \end{equation*}
\end{definition}
By definition, any state that is not in the viability kernel leads to $\constraintSet$ in finite time.
Such states are called \emph{unviable}.
The viability kernel is therefore the largest set that enables recursive feasibility of the problem of avoiding transitions into $\constraintSet$.
A closely related concept is the \emph{viable set}, which is the set of state-action pairs that preserve viability
\citep{HB2019}: \begin{equation*}
    \viableSet = \{(x,a)\in\stateActionSpace\mid x\in\viabilityKernel\land f(x,a) \in\viabilityKernel\}.
\end{equation*}
We also define the unviable set $\unviableSet = \stateActionSpace\setminus\viableSet$, and the critical set $\criticalSet = \unviableSet\cap(\viabilityKernel\times\actionSpace)$ \citep{MHT2021}.

\begin{definition}\label{def:viable policy}
    Let $\pi\in\policies$.
    We say that $\pi$ is \emph{safe from the state $x\in\stateSpace$} if $\Pbb[X_t\notin \constraintSet] = 1$ for all $t\in\Np$.
    We say that $\pi$ is \emph{safe} if it is safe from any $x\in\viabilityKernel$.
    For $\delta>0$, we say that $\pi$ is \emph{$\delta$-safe} if $\max_{\criticalSet}\pi\leq \delta$.
    We denote the set of policies safe from the state $x$ by $\viablePolicies(x)$ and that of safe policies by $\viablePolicies = \bigcap_{x\in\viabilityKernel}\viablePolicies(x)$.
\end{definition}
By definition of the viability kernel, the condition for a safe policy can be replaced with $\Pbb[X_t\in\viabilityKernel] = 1$ for all $t\in\N$.
\begin{remark}
    Another meaningful definition of $\delta$-safety could be that the policy assigns at most $\delta$ of probability mass to unviable actions, that is, $\sum_{a\in\criticalSet[x]}\pi(a\mid x)\leq\delta$ for all $x\in\viabilityKernel$, where $\criticalSet[x]$ is the $\stateSpace$-slice of $\criticalSet$ in $x$.
    This is equivalent to Definition~\ref{def:viable policy} up to the choice of $\delta$, since a $\delta$-safe policy satisfies $\sum_{a\in\criticalSet[x]}\pi(a\mid x)\leq\delta\cdot\lvert\criticalSet[x]\rvert$.
\end{remark}

In the next section, we consider an \ac{RL} problem over the set of safe policies and dual relaxations thereof.
To allow for general such relaxations, we introduce dynamic indicators.
\begin{definition}[Dynamic indicator]\label{def:dynamic indicator}
    Let $c:\stateActionSpace\to\Rnn$ and the associated discounted risk\begin{equation} \label{eq:risk}
        \risk(x,\pi) = \sum_{t=0}^\infty \gamma^t c(X_t, A_t).
    \end{equation}
    We say that $c$ is a dynamic indicator of $\constraintSet$ if, for all $x\in\viabilityKernel$, $\E[\risk(x,\pi)] > 0$ if, and only if, $\pi\notin\viablePolicies(x)$.
\end{definition}
The notion is independent of $\gamma\in(0,1)$. 
A simple example is the composition of the indicator function of $\constraintSet$ with the dynamics; it is a dynamic indicator of $\constraintSet$ \citep[Lemma 1]{MHST2022}.
While this one is always available, more elaborate dynamic indicators help penalize unviable states earlier in the Lagrangian relaxation and lower required penalties, eventually leading to better conditioning.

\begin{remark}[Recovering from constraints violation]\label{rmk:non terminal constraints}
Our results hold in the two settings where visiting $\constraintSet$ terminates the episode or not.
The second case is fully consistent with the setup of infinite time-horizon \ac{RL} that precedes.
Then, actions taken from $\constraintSet$ may map back into $\viabilityKernel$: trajectories leaving $\viabilityKernel$ may only return there \emph{after} visiting $\constraintSet$.
We even have $\constraintSet\cap\viabilityKernel\neq\emptyset$ in general, and the intersection is composed of states with actions that map in $\viabilityKernel\setminus\constraintSet$.
The first case, however, is not naturally framed in infinite time-horizon.
Indeed, while adding an absorbing state with null reward and dynamic indicator as in \citep{MHST2022} effectively cuts the sums in $\return(x,\pi)$ and $\risk(x,\pi)$, the sum in $\discountedEntropy(x,\pi)$ cannot be handled similarly without additional notation.
In the interest of conciseness and clarity, we thus only introduce formally the case of non-terminal $\constraintSet$. 
We emphasize that this is the more challenging case, as forbidding entropy collection after failure effectively further penalizes failure states.
\end{remark}

\section{Problem formulation}\label{pos:problem formulation}

We consider a standard constrained \ac{RL} problem with dynamics $f$, constraint set $\constraintSet$, viability kernel $\viabilityKernel$, return $\return$, and entropy regularization with temperature $\alpha>0$, as defined in Section~\ref{pos:preliminaries}:\begin{align}
    \label{eq:regularized constrained OCP}
    \max_{\pi\in\viablePolicies}~&\bar \return(x,\pi) + \alpha\bar\discountedEntropy(x,\pi).
\end{align}
We investigate the following questions:
\begin{question}\label{q:robustness}
    In what sense can we interpret \eqref{eq:regularized constrained OCP} as a robust control problem?
\end{question}
\begin{question}\label{q:approximation}
    Can we make \eqref{eq:regularized constrained OCP} amenable to unconstrained algorithms?
\end{question}
We provide an empirical answer to Question~\ref{q:robustness} by identifying that the constraints repel trajectories of optimal controllers to an extent controlled by $\alpha$, using tools from viability theory. The higher $\alpha$, the stronger the repulsion.
We then interpret this repulsion as a form of robustness to action noise, as the mode of the solution to \eqref{eq:regularized constrained OCP} favors visiting states where adversarial action noise takes longer to bring the agent to states with constraints.
We support this high-level interpretation with empirical demonstrations on toy examples and standard \ac{RL} benchmarks.
We then answer Question~\ref{q:approximation} through constraints penalties: we show that the solutions of \eqref{eq:regularized constrained OCP} are approximated arbitrarily closely by solving a Lagrangian relaxation of the constraint $\pi\in\viablePolicies$.
Provided that one can solve the resulting unconstrained problem in practice (using for instance classical \ac{RL} algorithms such as \ac{SAC}), our results provide a model-free way to approximate robustly-safe controllers arbitrarily closely with a tunable degree of robustness, as well as a clear interpretation of the temperature and penalty parameters.

\section{Theoretical results} \label{pos:theoretical results}

In this section, we explain on a high level why entropy regularization causes constraints to repel trajectories of optimal controllers and state our theoretical results on how to approximate \eqref{eq:regularized constrained OCP} with a classical unconstrained problem. The proofs are in Appendix~\splittableref{apdx:proofs}{D}. 

\subsection{Preserving future viable options}\label{sec:high level explanation}
\paragraph{Explanation} Our starting point to understand the claimed phenomenon is the observation that, for $x\in\viabilityKernel$, the maximum immediate entropy achievable by a \emph{safe} controller is limited by the number of unsafe actions available in $x$.
Specifically, it follows immediately from properties of $\entropy$ that \begin{equation}\label{eq:immediate entropy bound}
    \forall \pi\in\viablePolicies,~\entropy(\pi(\cdot\mid x)) \leq \ln \lvert\viableSet[x]\rvert.
\end{equation}
Since $\bar\discountedEntropy$ is the (expected discounted) sum of the left-hand side of \eqref{eq:immediate entropy bound} along trajectories, it is meaningful that entropy-regularized, safe optimal controllers avoid states for which this upper bound is low, i.e., where $\lvert\viableSet[x]\rvert$ is low.
On the other hand, completely forbidding actions leading to such states is also harmful, since it ``propagates'' the constraints backwards along trajectories, enforcing a similar upper bound on the immediate entropy obtainable in those previous states as well.
In other words, entropy-regularized controllers limit the probability of actions that eventually lead to states with a low bound in \eqref{eq:immediate entropy bound}, without completely avoiding such actions to avoid loss of immediate entropy.
The more steps it takes to reach states with many constraints, the less pronounced this effect of the constraints is.
It follows from this reasoning that trajectories that go towards states with many constraints generally have lower probability than trajectories that go away from them.
\par
This discussion supports on a high level that entropy regularization with constraints promotes constraints avoidance by preserving the long-term number of future viable options.
Next, we identify this behavior as a form of robustness to action noise of the mode policy. Indeed, the mode policy tends to minimize the long-term proportion of actions unavailable because of constraints, and thus the probability that action noise selects such an action is also approximately minimized.
We leave a precise formalization of this idea to future work, and support it with empirical evidence in Section \ref{pos:empirical results}.

\paragraph{A metric of robustness}
This discussion highlights that, for any $\pi\in\viablePolicies$ and $x\in\viabilityKernel$, the quantity $\bar\discountedEntropy(x,\pi)$ captures the \emph{long-term number of viable actions} that $\pi$ considers from $x$.
A controller achieving a high $\bar\discountedEntropy(x,\pi)$ successfully avoids highly-constrained states.
This motivates taking the cumulative entropy as a quantitative measurement of robustness, which enables comparing the robustness of controllers.
\begin{definition}\label{def:s-robustness}
    We say that $\pi_1\in\viablePolicies$ is \emph{less $\robSymbol$-robust} than $\pi_2\in\viablePolicies$, and write $\pi_1\leqRobust\pi_2$, if \begin{equation}\label{eq:s-robustness}
        \bar S(x,\pi_1) \leq \bar S(x,\pi_2),\quad \forall x\in\viabilityKernel.
    \end{equation}
\end{definition}

\paragraph{Behavior for increasing temperatures}

For $\alpha=0$, \eqref{eq:regularized constrained OCP} recovers the constrained, unregularized problem \begin{equation}
     \label{eq:constrained OCP}
    \max_{\pi\in\viablePolicies}~\bar \return(x,\pi).
\end{equation} 
We are then maximizing the return over viable policies with no concerns about robustness.
As $\alpha$ increases, entropy is more and more prevalent in the objective of \eqref{eq:regularized constrained OCP}, whose solution converges to the maximum entropy policy $\maxEntPolicy$\begin{equation}\label{eq:max ent OCP}
    \maxEntPolicy = \arg\max_{\pi\in\viablePolicies} \bar\discountedEntropy(x,\pi),\quad\forall x\in\viabilityKernel.
\end{equation}
This is best seen through the soft-value function.
\begin{theorem}\label{thm:constrained increasing alpha}
    Consider the soft-Q-value functions $\maxEntQValue$ and $\constrainedQValue{\alpha}$ of \eqref{eq:max ent OCP} and \eqref{eq:regularized constrained OCP}, respectively and for all $\alpha\in\Rnn$.
    Then, $\max_{\viableSet}\rvert\frac1\alpha\constrainedQValue{\alpha}-\maxEntQValue\lvert\to0$ as $\alpha\to\infty$.
\end{theorem}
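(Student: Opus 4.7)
The plan is a standard contraction-perturbation argument, made accessible by rescaling the soft-Q-value. I would set $\tilde Q_\alpha := \constrainedQValue{\alpha}/\alpha$. Restricting the optimality principle to safe policies, the soft-Bellman equation for \eqref{eq:regularized constrained OCP} takes, for $(x,a)\in\viableSet$ and $x^\prime = f(x,a)$, the form
\begin{equation*}
    \constrainedQValue{\alpha}(x,a) = r(x,a) + \gamma\alpha\ln\Big[\sum_{b\in\viableSet[x^\prime]}\exp\big(\tfrac{1}{\alpha}\constrainedQValue{\alpha}(x^\prime,b)\big)\Big],
\end{equation*}
with the sum restricted to the viable next actions $\viableSet[x^\prime]$. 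Dividing by $\alpha$ yields a fixed-point equation for $\tilde Q_\alpha$ that is structurally identical to the Bellman equation satisfied by $\maxEntQValue$ (which is \eqref{eq:max ent OCP} with $r\equiv 0$), except for an additive inhomogeneity of magnitude $r(x,a)/\alpha$.

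Next I would cast $\tilde Q_\alpha$ and $\maxEntQValue$ as the unique fixed points on the space of bounded functions over $\viableSet$ of two soft-Bellman operators $\Tcal_\alpha$ and $\Tcal_\infty$. Since log-sum-exp is 1-Lipschitz in sup norm, both operators are $\gamma$-contractions, and by construction $\|\Tcal_\alpha Q - \Tcal_\infty Q\|_\infty \leq \|r\|_\infty/\alpha$ uniformly in $Q$. A standard triangle inequality then gives
\begin{equation*}
    \|\tilde Q_\alpha - \maxEntQValue\|_\infty = \|\Tcal_\alpha \tilde Q_\alpha - \Tcal_\infty \maxEntQValue\|_\infty \leq \gamma\|\tilde Q_\alpha - \maxEntQValue\|_\infty + \frac{\|r\|_\infty}{\alpha},
\end{equation*}
which rearranges to $\|\tilde Q_\alpha - \maxEntQValue\|_\infty \leq \|r\|_\infty/[\alpha(1-\gamma)] \to 0$ as $\alpha\to\infty$, since $r$ is bounded. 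This is precisely the stated conclusion, as the max over $\viableSet$ of an absolute value is exactly this sup norm.

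The main obstacle is not the contraction step itself, which is routine, but the careful derivation of the soft-Bellman equation for the \emph{constrained} problem: unlike the one displayed in Section~\ref{pos:preliminaries} for the unconstrained problem, the sum must range only over $\viableSet[x^\prime]$. This requires restricting the optimality principle to $\viablePolicies$ and observing that $x^\prime\in\viabilityKernel$ whenever $(x,a)\in\viableSet$, so that $\viableSet[x^\prime]$ is nonempty and the log-sum-exp is well defined; some care is also warranted with respect to non-terminal failures (\emph{cf.}~Remark~\ref{rmk:non terminal constraints}). With that machinery in place, the rescaling collapses the two problems onto the same operator up to a vanishing perturbation, and the result follows.
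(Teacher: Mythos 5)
Your proof is correct, and while it starts from the same setup as the paper --- rescaling to $\frac1\alpha\constrainedQValue{\alpha}$ and viewing it, together with $\maxEntQValue$, as the unique fixed points of two soft-Bellman operators on $\viableSet$ that differ only by the additive term $r/\alpha$ --- the convergence step is genuinely different. The paper argues qualitatively: it notes that the family $\{\frac1\alpha\constrainedQValue{\alpha}\}$ is bounded, extracts a convergent subsequence, passes to the limit in the fixed-point equation using continuity of the maximum-entropy operator and the $\lVert r\rVert/\alpha$ discrepancy between the two operators, and concludes by uniqueness of the fixed point (unique accumulation point of a bounded family in finite dimension). You instead exploit that both operators are $\gamma$-contractions in sup norm (log-sum-exp being $1$-Lipschitz) together with the uniform perturbation bound $\lVert r\rVert_\infty/\alpha$, which yields the explicit estimate $\lVert\frac1\alpha\constrainedQValue{\alpha}-\maxEntQValue\rVert_\infty\leq \lVert r\rVert_\infty/[(1-\gamma)\alpha]$. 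Your route buys an explicit convergence rate, dispenses with the subsequence extraction, and incidentally proves the boundedness the paper only asserts; the paper's route avoids the quantitative bookkeeping but gives no rate. Both arguments rest on the same background fact --- that the constrained, entropy-regularized soft-$Q$-value is the unique fixed point of the soft-Bellman operator whose action sum is restricted to $\viableSet[x^\prime]$ --- which the paper asserts as a property of the optimal Bellman operator and which you correctly flag (and sketch) as the only point requiring care, including the observation that $f(x,a)\in\viabilityKernel$ whenever $(x,a)\in\viableSet$, so the restricted log-sum-exp is well defined.
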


\begin{corollary}\label{clry:robustness}
    Denote by $\maxEntPolicy$ and $\constrainedPolicy{\alpha}$ the solutions of \eqref{eq:max ent OCP} and \eqref{eq:regularized constrained OCP}, respectively and for all $\alpha\in\Rnn$.
    Then, the map $\alpha\mapsto\constrainedPolicy{\alpha}$ is monotonic for $\leqRobust$ and $ \max_{\viableSet}\lvert\constrainedPolicy{\alpha}-\maxEntPolicy\rvert\to 0$ as $\alpha\to\infty$.
\end{corollary}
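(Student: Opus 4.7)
The statement has two independent claims: monotonicity of $\alpha\mapsto\constrainedPolicy{\alpha}$ with respect to $\leqRobust$, and uniform convergence $\constrainedPolicy{\alpha}\to\maxEntPolicy$ on $\viableSet$. My plan is to obtain the monotonicity from a classical interchange argument exploiting the optimality of each $\constrainedPolicy{\alpha_i}$ in its own problem, and to derive the convergence by transferring the uniform convergence of Theorem~\ref{thm:constrained increasing alpha} through Lipschitz continuity of $\softmax$.

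For monotonicity, fix $0\leq\alpha_1<\alpha_2$ and any $x\in\viabilityKernel$. Since $\constrainedPolicy{\alpha_1},\constrainedPolicy{\alpha_2}\in\viablePolicies$, each is an admissible competitor in the other's problem, so optimality in \eqref{eq:regularized constrained OCP} yields
\begin{align*}
\bar\return(x,\constrainedPolicy{\alpha_1}) + \alpha_1\bar\discountedEntropy(x,\constrainedPolicy{\alpha_1}) &\geq \bar\return(x,\constrainedPolicy{\alpha_2}) + \alpha_1\bar\discountedEntropy(x,\constrainedPolicy{\alpha_2}), \\
\bar\return(x,\constrainedPolicy{\alpha_2}) + \alpha_2\bar\discountedEntropy(x,\constrainedPolicy{\alpha_2}) &\geq \bar\return(x,\constrainedPolicy{\alpha_1}) + \alpha_2\bar\discountedEntropy(x,\constrainedPolicy{\alpha_1}).
\end{align*}
Summing the two inequalities cancels the return terms and collapses the result to $(\alpha_2-\alpha_1)\bigl[\bar\discountedEntropy(x,\constrainedPolicy{\alpha_2})-\bar\discountedEntropy(x,\constrainedPolicy{\alpha_1})\bigr]\geq 0$, hence $\bar\discountedEntropy(x,\constrainedPolicy{\alpha_1})\leq\bar\discountedEntropy(x,\constrainedPolicy{\alpha_2})$. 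Since $x\in\viabilityKernel$ was arbitrary, Definition~\ref{def:s-robustness} gives $\constrainedPolicy{\alpha_1}\leqRobust\constrainedPolicy{\alpha_2}$.

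For the convergence claim, I would first note that both optimal policies admit a softmax representation analogous to \eqref{eq:soft optimal policy} but restricted to viable actions: for $(x,a)\in\viableSet$, $\constrainedPolicy{\alpha}(a\mid x)=\softmax[\tfrac{1}{\alpha}\constrainedQValue{\alpha}(x,\cdot)](a)$ and $\maxEntPolicy(a\mid x)=\softmax[\maxEntQValue(x,\cdot)](a)$, where both softmaxes run only over $\viableSet[x]$; on $\criticalSet$ both policies vanish because they lie in $\viablePolicies$. Theorem~\ref{thm:constrained increasing alpha} provides uniform convergence of the softmax arguments on $\viableSet$, and since $\actionSpace$ is finite and the arguments remain bounded (the limit $\maxEntQValue$ is itself bounded), the standard Lipschitz property of $\softmax$ on bounded inputs transfers the convergence to the policies. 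On $\criticalSet$ the difference is identically zero, so $\max_{\viableSet}\lvert\constrainedPolicy{\alpha}-\maxEntPolicy\rvert\to 0$.

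The main obstacle is rigorously establishing the softmax form of $\constrainedPolicy{\alpha}$ restricted to $\viableSet[x]$: the hard constraint $\pi\in\viablePolicies$ forbids unviable actions, and one has to verify that $\constrainedQValue{\alpha}$ satisfies a soft Bellman equation whose $\softmax$ runs only over viable actions at each viable state, so that the induced optimal policy has the claimed form. This structural fact is almost certainly a byproduct of the proof of Theorem~\ref{thm:constrained increasing alpha}, which already needs to work with $\constrainedQValue{\alpha}$ on $\viableSet$; once accepted, both parts of the corollary reduce to the arguments above with only bookkeeping.
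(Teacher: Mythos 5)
Your proposal matches the paper's proof essentially step for step: the paper establishes monotonicity by exactly the same interchange argument (writing the two optimality inequalities for $\constrainedPolicy{\alpha_1}$ and $\constrainedPolicy{\alpha_2}$, combining them to get $(\alpha_2-\alpha_1)\bigl[\bar\discountedEntropy(x,\constrainedPolicy{\alpha_2})-\bar\discountedEntropy(x,\constrainedPolicy{\alpha_1})\bigr]\geq 0$ for every $x\in\viabilityKernel$), and it dismisses the convergence claim as following immediately from Theorem~\ref{thm:constrained increasing alpha} together with the softmax representation \eqref{eq:soft optimal policy}, which is your Lipschitz-continuity-of-$\softmax$ argument; the viable-actions-only softmax form you flag as the remaining bookkeeping is indeed implicit in the paper's proof of Theorem~\ref{thm:constrained increasing alpha}, which works with the soft Bellman operators restricted to $\viableSet$.
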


Corollary~\ref{clry:robustness} formalizes that the solution of \eqref{eq:regularized constrained OCP} becomes more $\robSymbol$-robust and approaches that of \eqref{eq:max ent OCP} as $\alpha$ increases.
The mode of \eqref{eq:regularized constrained OCP} thus gets robust to action noise by the preceding explanations and empirical evidence of Section~\ref{pos:empirical results}.

\subsection{Relaxing safety constraints with penalties}
\label{ssec:penalized}

A practical consequence of our observation is that solving \eqref{eq:regularized constrained OCP} yields controllers that preserve a safe distance to the constraints with high probability.
An important drawback, however, is that the problem involves the viable set $\viableSet$, which is unknown in model-free situations.
We now leverage a Lagrangian relaxation of these viability constraints to make the problem amenable to model-free algorithms.
The results in this section extend those of \cite{MHST2022} to the case of an entropy-regularized objective.

In this section, we consider $c$ a dynamic indicator function of $\constraintSet$ and $\risk$ the associated discounted risk (Definition~\ref{def:dynamic indicator}).
We are interested in the following penalized problem \begin{equation}\label{eq:penalized OCP}
    \penalizedPolicy{\alpha}{p} = \arg\max_{\pi\in\policies}\bar\return(x,\pi) + \alpha\bar\discountedEntropy(x,\pi) - p\risk(x,\pi),
\end{equation}
where $p\in\Rnn$ is a penalty parameter.
It is known that in the case $\alpha=0$, \eqref{eq:penalized OCP} and \eqref{eq:regularized constrained OCP} share the same solutions if $p$ is large enough \citep[Theorem\,2]{MHST2022}.
Unfortunately, this result does not directly carry to the case $\alpha>0$: from \eqref{eq:soft optimal policy}, $\penalizedPolicy\alpha p(a\mid x)>0$ for all $(x,a)\in\stateActionSpace$, and thus in particular $\penalizedPolicy{\alpha}{p}\notin\viablePolicies$.
However, scaling the penalty remains possible if one accepts to trade viability for $\delta$-safety.
\begin{theorem}\label{thm:approximation RCOCP}
    For any $\delta>0$, $\epsilon>0$, and $\alpha>0$, there exists $\optimalPenalty\in\Rnn$ such that, for all $p>\optimalPenalty$, the optimal policy of \eqref{eq:penalized OCP} $\penalizedPolicy{\alpha}{p}$ is $\delta$-safe and \begin{equation}
        \max_{\viableSet}\lvert\penalizedPolicy{\alpha}{p} - \constrainedPolicy\alpha\rvert < \epsilon.
        \end{equation}
\end{theorem}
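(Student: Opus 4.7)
The plan is to extend the penalty-method argument of \cite{MHST2022} to the soft-Bellman setting by obtaining a pointwise limit of the soft value function $V_{\alpha,p}(x) := \alpha \ln \sum_a \exp(\penalizedQValue{\alpha}{p}(x,a)/\alpha)$ as $p \to \infty$, and transferring this to the softmax policy via \eqref{eq:soft optimal policy}. I would treat $\viabilityKernel$ and $\unviabilityKernel$ separately, establishing a finite limit on the viability kernel and divergence to $-\infty$ outside of it.

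\textbf{Step 1 (uniform bounds and monotonicity).} A uniform upper bound $V_{\alpha,p} \leq M_+$ follows from boundedness of $r$ and the per-step entropy bound $\entropy(\pi(\cdot\mid x)) \leq \ln|\actionSpace|$. For $x \in \viabilityKernel$, any safe $\pi \in \viablePolicies$ satisfies $\bar\risk(x,\pi) = 0$, hence $V_{\alpha,p}(x) \geq V_\alpha(x)$ uniformly in $p$; combined with the monotonicity of $V_{\alpha,p}$ in $p$, this yields a pointwise limit $V_\infty$ on $\viabilityKernel$. For $x \in \unviabilityKernel$, compactness of the policy simplex and continuity of $\pi \mapsto \bar\risk(x,\pi)$ give $\rho_{\min}(x) := \inf_\pi \bar\risk(x,\pi) > 0$ since no policy is safe from $x$, so $V_{\alpha,p}(x) \leq M_+ - p\,\rho_{\min}(x) \to -\infty$.

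\textbf{Step 2 (identifying the limit).} Passing to the limit in the soft Bellman equation at $x \in \viabilityKernel$, I would split the exponential sum between viable actions $a \in \viableSet[x]$, for which $c(x,a) = 0$ (since $c$ vanishes along any safe continuation) and $f(x,a) \in \viabilityKernel$, and critical actions $a \in \criticalSet[x]$, for which $f(x,a) \in \unviabilityKernel$. Each viable term converges to $\exp((r(x,a) + \gamma V_\infty(f(x,a)))/\alpha)$; each critical term vanishes because $V_{\alpha,p}(f(x,a)) \to -\infty$ by Step~1 (the $-p\,c(x,a)$ contribution can only accelerate this). Taking $\alpha \ln$ of the limiting sum yields the soft Bellman equation of \eqref{eq:regularized constrained OCP} restricted to $\viabilityKernel$, and contractivity of the constrained soft Bellman operator identifies $V_\infty = V_\alpha$.

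\textbf{Step 3 (policy convergence and conclusion).} Translating via $\penalizedQValue{\alpha}{p}(x,a) = r(x,a) - p\,c(x,a) + \gamma V_{\alpha,p}(f(x,a))$ and the finiteness of $\stateActionSpace$ gives uniform convergence $\penalizedQValue{\alpha}{p} \to \constrainedQValue{\alpha}$ on $\viableSet$ and $\penalizedQValue{\alpha}{p} \to -\infty$ on $\criticalSet$. Inserting into \eqref{eq:soft optimal policy}, critical-action numerators vanish while viable-action numerators converge to those of $\constrainedPolicy{\alpha}$, yielding $\penalizedPolicy{\alpha}{p} \to \constrainedPolicy{\alpha}$ uniformly on $\viableSet$ and $\penalizedPolicy{\alpha}{p} \to 0$ uniformly on $\criticalSet$. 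Choosing $\optimalPenalty$ large enough to enforce both $\max_{\viableSet}|\penalizedPolicy{\alpha}{p} - \constrainedPolicy{\alpha}| < \epsilon$ and $\max_{\criticalSet}\penalizedPolicy{\alpha}{p} \leq \delta$ concludes. The hardest point is Step~2: when $c$ vanishes on $\criticalSet$, the penalty does not enter the soft Bellman equation directly but only through the successor value, which is why the uniform divergence of $V_{\alpha,p}$ on $\unviabilityKernel$ from Step~1 is essential.
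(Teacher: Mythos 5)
There is a genuine gap in Step~1, and it is precisely at the point you yourself flag as essential. You claim that for every $x\in\unviabilityKernel$ one has $\inf_{\pi}\bar\risk(x,\pi)>0$ ``since no policy is safe from $x$,'' and hence $V_{\alpha,p}(x)\to-\infty$. But Definition~\ref{def:dynamic indicator} only constrains the risk for initial states \emph{in} $\viabilityKernel$: it says nothing about $\bar\risk(x,\cdot)$ when $x\notin\viabilityKernel$. A concrete counterexample: take states $v\in\viabilityKernel$, $u_1,u_2\in\unviabilityKernel$, $F\in\constraintSet$, where $v$ has a viable self-loop action $a_0$ and a critical action $a_1$ leading to $u_1$, and $u_1\to u_2\to F$ deterministically. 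Setting $c(v,a_1)=1$ and $c\equiv 0$ elsewhere gives a valid dynamic indicator (any unsafe policy from $v$ must take $a_1$ with positive probability and thus incurs positive risk), yet $\bar\risk(u_1,\pi)=0$ for every $\pi$, so $V_{\alpha,p}(u_1)$ does \emph{not} diverge. Consequently your Step~2 argument that each critical term in the soft Bellman sum vanishes ``because $V_{\alpha,p}(f(x,a))\to-\infty$, the $-p\,c(x,a)$ contribution only accelerating this'' is not justified in general: sometimes only the $-p\,c(x,a)$ term is doing the work, sometimes only the successor value, and neither one alone is guaranteed positive/divergent.

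The correct statement, and the one the paper actually proves, is the \emph{combined} positivity $c(x,a)+\gamma\min_{\pi\in\policies}\bar\risk(f(x,a),\pi)>0$ for every $(x,a)\in\criticalSet$ (Corollary~\ref{clry:dynamic indicator on unviable set}), which requires a nontrivial argument: the characterization of dynamic indicators in Lemma~\ref{lemma:characterization dynamic indicator} ($c_{\mid\viableSet}=0$ plus positivity of $c$ somewhere along any kernel-initialized trajectory before it returns from $\constraintSet$), applied to a deterministic risk-minimizing continuation prepended with the critical pair. With this quantity bounded away from zero (finiteness of $\criticalSet$), one gets the affine upper bound $\max_{\criticalSet}\penalizedQValue{\alpha}{p}\leq u_1+\alpha u_2-p\,u_3$ of Lemma~\ref{lemma:bounds penalized q value}, and then your remaining structure goes through essentially as in the paper: monotone-in-$p$, lower-bounded values on the viable side give a limit, the limit is identified as $\constrainedQValue{\alpha}$ by splitting the log-sum-exp over $\viableSet[x']$ and $\criticalSet[x']$ and invoking uniqueness of the fixed point (Lemma~\ref{lemma:convergence q value}), and the softmax transfer plus an explicit threshold yields $\delta$-safety and $\epsilon$-closeness. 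So the architecture of your proof matches the paper's (stated for $V$ rather than $Q$), but as written it rests on a per-state divergence claim on $\unviabilityKernel$ that is false for general dynamic indicators; you must replace it with the per-critical-pair combined bound.
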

\begin{proof}[Sketch of proof]
    The penalty enforces an upper-bound on the soft $Q$-value of state-actions in $\criticalSet$ (Lemma~\ref{lemma:bounds penalized q value}).
    Values there thus decrease arbitrarily low as the penalty increases, while it remains lower-bounded on $\viableSet$.
    This, in turn, shows $\delta$-safety of $\penalizedPolicy\alpha p$ for $p$ large enough.
    Therefore, the value function of \eqref{eq:penalized OCP} approximates to that of \eqref{eq:regularized constrained OCP} on $\viableSet$, and $\penalizedPolicy\alpha p$ gets arbitrarily close to $\constrainedPolicy\alpha$.
\end{proof}

\subsection{Safe policies from the relaxed problem}
\label{ssec:mode}

It follows from Theorem~\ref{thm:approximation RCOCP} that the solution $\penalizedPolicy{\alpha}{p}$ to the penalized problem \eqref{eq:penalized OCP} is a $\delta$-safe policy and the \emph{mode} of $\penalizedPolicy{\alpha}{p}$ is safe if the penalty is sufficient.
\begin{corollary}\label{co:safemode}
    Under the same notations as Theorem~\ref{thm:approximation RCOCP}, there exists $\bar\delta\in(0,1)$ such that, if $\delta\in(0,\bar\delta)$, then the policy $\hat{\pi}_{\alpha,p}$ following the mode \eqref{eq:mode} of $\penalizedPolicy{\alpha}{p}$
    is safe.
\end{corollary}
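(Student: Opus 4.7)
The plan is to exhibit a concrete value of $\bar\delta$ and show it works by a simple pigeonhole counting argument on the probability mass function $\penalizedPolicy{\alpha}{p}(\cdot\mid x)$. Specifically, I would set $\bar\delta = 1/|\actionSpace|$ (recalling $\actionSpace$ is finite in the paper's setup).

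The first step is to deduce from Theorem~\ref{thm:approximation RCOCP} that $\penalizedPolicy{\alpha}{p}$ is $\delta$-safe, so for every $x\in\viabilityKernel$ and every unviable action $a\in\criticalSet[x]$ we have $\penalizedPolicy{\alpha}{p}(a\mid x)\leq\delta$. Next, because $\penalizedPolicy{\alpha}{p}(\cdot\mid x)$ is a probability mass function over the finite set $\actionSpace$, its maximum must be at least the average value $1/|\actionSpace|$, so $\max_{b\in\actionSpace}\penalizedPolicy{\alpha}{p}(b\mid x)\geq 1/|\actionSpace|$. If $\delta<\bar\delta=1/|\actionSpace|$, then every unviable action satisfies
\begin{equation*}
    \penalizedPolicy{\alpha}{p}(a\mid x)\leq\delta<\tfrac{1}{|\actionSpace|}\leq\max_{b\in\actionSpace}\penalizedPolicy{\alpha}{p}(b\mid x),
\end{equation*}
so no unviable action belongs to $\arg\max_b\penalizedPolicy{\alpha}{p}(b\mid x)$. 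Consequently, the support of $\hat\pi_{\alpha,p}(\cdot\mid x)$ is contained in $\viableSet[x]$ for every $x\in\viabilityKernel$.

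The final step is a straightforward induction on $t$: if $X_t\in\viabilityKernel$ almost surely, then $\hat\pi_{\alpha,p}$ selects an action $A_t$ with $(X_t,A_t)\in\viableSet$ almost surely, so by definition of the viable set $X_{t+1}=f(X_t,A_t)\in\viabilityKernel$ almost surely. Starting from any $x\in\viabilityKernel$, this gives $\Pbb[X_t\in\viabilityKernel]=1$ for all $t\in\N$, which the paper equates to safety of $\hat\pi_{\alpha,p}$.

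There is no real obstacle here: the argument is essentially a pigeonhole observation combined with the $\delta$-safety bound from Theorem~\ref{thm:approximation RCOCP} and the deterministic propagation of viability along the mode. The only mild subtlety worth spelling out is that strict inequality $\delta<1/|\actionSpace|$ is needed to rule out ties between unviable and viable actions at the boundary case of a uniform distribution, which is why the result is stated for $\delta\in(0,\bar\delta)$ rather than $\delta\leq\bar\delta$.
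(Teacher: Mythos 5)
Your argument is correct and is essentially the paper's intended reasoning: the paper's own proof is just the one-line remark that the corollary ``directly follows from Theorem~\ref{thm:approximation RCOCP}'', and your explicit choice $\bar\delta = 1/\lvert\actionSpace\rvert$, the pigeonhole bound on the maximum of the pmf, and the induction propagating viability through $\viableSet$ are exactly the details being elided there. No gap; your note on why strict inequality $\delta<1/\lvert\actionSpace\rvert$ is needed to exclude ties is a sensible addition.
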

\begin{proof}
    This directly follows from Theorem~\ref{thm:approximation RCOCP}.
\end{proof}
We empirically investigate the robustness of this policy in the next section.

\paragraph{Conclusion on the questions}
Finally, we are able to answer Questions~\ref{q:robustness} and~\ref{q:approximation} based on the following arguments.
Entropy regularization in the presence of constraints biases the learning problem towards policies that avoid constraints to preserve a high number of viable options, with the temperature coefficient monotonically controlling the degree of $\robSymbol$-robustness.
Furthermore, the viability constraints of \eqref{eq:regularized constrained OCP} can be relaxed by a Lagrangian formulation at the price of trading viability for $\delta$-safety.
Specifically, the solution of \eqref{eq:penalized OCP} approximates arbitrarily closely that of \eqref{eq:regularized constrained OCP}, provided that the penalty is sufficiently high.
In particular, penalties above a finite threshold recover the mode of \eqref{eq:regularized constrained OCP} exactly and the policy following that mode is therefore safe.
Put together, these results provide a model-free way to approximate safe and robust controllers with tunable degrees of robustness.

\section{Empirical results} \label{pos:empirical results}

We provide in this section the empirical evidence that entropy regularization with constraints yields policies whose mode avoids constraints and is robustly safe under increased action noise, and that penalties enable approximating these constraints.
We start with a discrete grid world, where we can solve the constrained problem numerically, to showcase how constraints repel trajectories in the presence of entropy regularization.
Second, we introduce failure penalties to reveal how they recover the constraints.
Finally, we illustrate the claimed robustness to increased action noise on MuJoCo benchmarks\footnote{The code to reproduce results is available at \url{www.github.com/Data-Science-in-Mechanical-Engineering/entropy_robustness}.}.
These experimental results confirm our interpretation of the two hyperparameters: penalties control the probability of failure, while the temperature controls the degree of robustness.

\subsection{Cliff walking}\label{ssec:cliff}

Our gridworld (Fig.~\ref{fig:fenced_cliff}) is an adaptation of the cliff environment\,\citep[Example~6.6]{SB2018}.
Three states in the middle of the bottom row represent the cliff; the failure set $\constraintSet$ the agent should robustly avoid.
The right column represents the target of escaping the cliff.
The failure and target states are invariant under all actions.
Otherwise, the dynamics follow the direction of the chosen action, or map back into the current state if the agent hits a border.
Actions outside of the cliff or target get a $-1$ reward.

\subsubsection{Interaction of constraints and entropy}

\begin{figure}[t]
    \centering
    \includegraphics{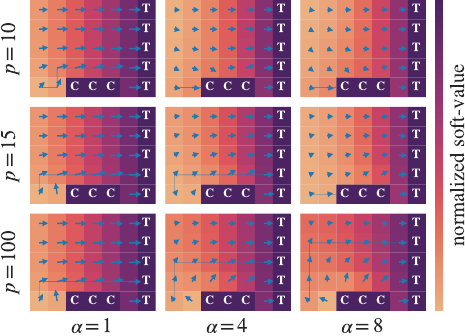}
    \caption{\textbf{Unconstrained cliff --- Safety and robustness as functions of $\alpha$ and $p$:} Safety and robustness can be achieved by penalizing ($p$) the constraints $\constraintSet$ and adjusting the temperature ($\alpha$).}
    \label{fig:robust_cliff}
\end{figure}

The constrained version of the environment --- the fenced cliff --- only offers three actions to an agent neighboring the cliff, imposing a lower upper-bound on the entropy in those states as per \eqref{eq:immediate entropy bound}.
This observation is key in understanding why entropy regularization avoids states with unviable actions, yielding robustness (Fig. \ref{fig:fenced_cliff}.d).

Indeed, when maximizing entropy only (Fig.~\ref{fig:fenced_cliff}.d), the optimal policy favors transitioning away from states neighboring the constraints due to the aforementioned upper bound on immediate entropy.
In turn, the immediate entropy of the policy in the 2-step neighbors is also reduced since some transitions are less desirable.
The same logic applies recursively ``outwards'' from states with unviable actions, and the policy generally pushes trajectory away from the constraints; that is, towards the top corners.
When initialized on the right, the policy aims at reaching the invariant target states where full entropy is available.
When initialized on the left, the mode of the policy favors the top-left corner to avoid the low entropy of states close to the constraints, overcoming the long-term benefit of the target state.
This trade-off between short- and long-term entropy depends on the discount factor $\gamma$.

In contrast, finite temperatures (Fig.~\ref{fig:fenced_cliff}.a--c) further encourage reaching the goal state to avoid the negative reward.
The agent thus takes more risks to collect rewards while preserving some distance from the constraints.
This trade-off between performance and robustness is controlled by the temperature parameter $\alpha$:
high values favor entropy (and, thus, robustness by what precedes), whereas lower ones favor performance.
While high robustness may be desirable, it comes at the price of suboptimality.
Too high a temperature may entirely prevent task completion for the mode policy if the path thereto is inherently risky, leading to unsuccessful learning outcomes due to poor choice of hyperparameters.

\subsubsection{Interaction of penalties and entropy}\label{ssec:unconstrained_cliff}

Sufficient penalties enable solving the constrained problem (Fig.~\ref{fig:robust_cliff}), consistently with Theorem~\ref{thm:approximation RCOCP}.
The example shows the robustness--performance trade-off with different temperatures and penalties. 
Importantly, entropy and penalties are now competing, and any fixed penalty is eventually overcome by high temperatures, degrading safety (Fig. \ref{fig:delta_safe}).
The penalty thus needs to scale with the temperature to ensure $\delta$-safety with a low $\delta$.
\par
The minimum sufficient value for the penalty depends not only on $\alpha$, but also on other hyperparameters such as the reward function, discount factor, and dynamic indicator.
For instance, if the dynamic indicator is simply the indicator function of the constraints set, then the minimum penalty scales exponentially with the longest trajectory contained in $\stateSpace\setminus(\viabilityKernel\cup\constraintSet)$.
Other choices of dynamic indicators may improve this dependency by incurring the penalty earlier in the trajectory, but choosing the penalty remains a problem-specific concern.
While theory suggests picking it as high as possible, too high a penalty may introduce numerical instabilities when combined with value function approximators outside of tabular methods.
We refer to \cite{MHST2022} for an extended discussion.

\begin{figure}[t]
    \centering
    \resizearxiv{
    \includegraphics{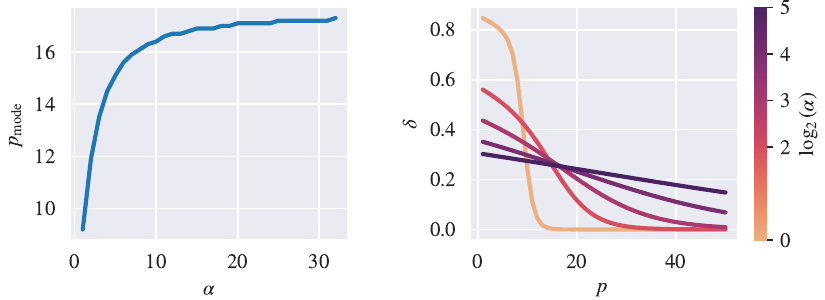}
    }
    \caption{\textbf{Effect of the temperature on the minimum safe penalty $p_{\mathrm{mode}}$ (left) and $\delta$-safety (right) on the cliff:} \emph{Left:} The minimum penalty such that the mode of the stochastic policy is safe.
    \emph{Right:} The minimum $\delta$ such that the policy is $\delta$-safe as functions of $p$ and $\alpha$. Policies get safer as $p$ increases, but less safe as $\alpha$ does. }
    \label{fig:delta_safe}
\end{figure}

\subsection{Reinforcement learning benchmarks}
\label{ssec:rl}

We now illustrate on standard \ac{RL} benchmarks that this constraints avoidance translates into increased robustness to action noise.
For this, we train entropy-regularized agents on two popular MuJoCo benchmarks, namely the \texttt{Pendulum-v1} and the \texttt{Hopper-v4} environment \citep{towers2023gymnasium}, with various temperatures.
We then evaluate the mode of the learned policy under additional external action noise, whereas training is noise-free.
The action noise is sampled from a uniform distribution $\mathcal{U}(-\epsilon, \epsilon)$.
For each value of the temperature, we evaluate the frequency of successful constraints avoidance over $100$ episodes.
Further details on the setup are in Appendix~\splittableref{apdx:experiment_details}{B} and additional results are in Appendix~\splittableref{apdx:cliff}{A}.
\par
Consistently with our theoretical results, we find that (i) entropy-regularization decreases the return by avoiding high-value states with many unviable actions; and (ii) the mode of entropy-regularized policies is more robust to disturbances as the training temperature increases.

\subsubsection{Pendulum}

\begin{figure*}
    \centering
    \raisebox{-0.5\height}{%
        \begin{tikzpicture}

\coordinate (pivot) at (0,0);
\coordinate (bob) at ({1.25*cos(104.9)},{1.25*sin(104.9)}); 

\coordinate (target) at ({1.5*cos(130)},{1.5*sin(130)});
\draw[dash dot] (pivot) -- (target) node[anchor=north] {};

\filldraw [black] (pivot) circle (2pt);

\draw[line width=5pt, blue, opacity=0.5,line cap=round] (pivot) -- (bob);

\draw[->] (0,1.) arc [start angle=90, end angle=104.9, radius=1.];


\node at (0, 1.8) {$\alpha=0.1$};

\draw[dashed] (pivot) -- (0,1.5) node[anchor=south] {};

\fill[pattern=north west lines, pattern color=red] (-1.25,0) rectangle (1.25,-.5);
\draw[] (-1.25,0) -- (1.25,0) node[anchor=north] {$\constraintSet$};


\draw[dashed] (1.25,0) arc [start angle=0, end angle=180, radius=1.25];

\end{tikzpicture}%
        \begin{tikzpicture}

\coordinate (pivot) at (0,0);
\coordinate (bob) at ({1.25*cos(97.3)},{1.25*sin(97.3)}); 

\coordinate (target) at ({1.5*cos(130)},{1.5*sin(130)});
\draw[dash dot] (pivot) -- (target) node[anchor=north] {};

\filldraw [black] (pivot) circle (2pt);

\draw[line width=5pt, green, opacity=0.5,line cap=round] (pivot) -- (bob);

\draw[->] (0,1.) arc [start angle=90, end angle=97.3, radius=1.];


\node at (0, 1.8) {$\alpha=0.3$};

\draw[dashed] (pivot) -- (0,1.5) node[anchor=south] {};

\fill[pattern=north west lines, pattern color=red] (-1.25,0) rectangle (1.25,-.5);
\draw[] (-1.25,0) -- (1.25,0) node[anchor=north] {$\constraintSet$};


\draw[dashed] (1.25,0) arc [start angle=0, end angle=180, radius=1.25];

\end{tikzpicture}%
        \begin{tikzpicture}

\coordinate (pivot) at (0,0);
\coordinate (bob) at ({1.25*cos(90.7)},{1.25*sin(90.7)}); 

\coordinate (target) at ({1.5*cos(130)},{1.5*sin(130)});
\draw[dash dot] (pivot) -- (target) node[anchor=north] {};

\filldraw [black] (pivot) circle (2pt);

\draw[line width=5pt, red, opacity=0.5,line cap=round] (pivot) -- (bob);

\draw[->] (0,1.) arc [start angle=90, end angle=90.7, radius=1.];


\node at (0, 1.8) {$\alpha=1$};

\draw[dashed] (pivot) -- (0,1.5) node[anchor=south] {};

\fill[pattern=north west lines, pattern color=red] (-1.25,0) rectangle (1.25,-.5);
\draw[] (-1.25,0) -- (1.25,0) node[anchor=north] {$\constraintSet$};


\draw[dashed] (1.25,0) arc [start angle=0, end angle=180, radius=1.25];

\end{tikzpicture}%
    }%
    \\
    \vspace{2mm}
    \raisebox{-0.5\height}{\includegraphics{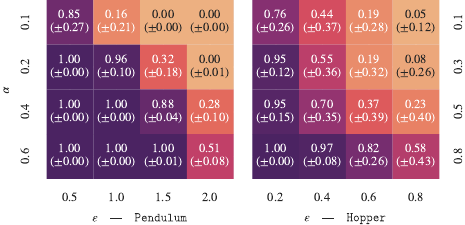}}
    \caption{\textbf{Learning robust policies with SAC} 
    \emph{Top:} With a target angle at $40^{\circ}$ (dashed-dotted line) the agent learns to stabilize at different angles depending on the training temperature. For higher temperatures, the agent stabilizes the pendulum further away from the failure set $\constraintSet$.
    \emph{Bottom:} 
    Rate of successful failure avoidance on the disturbed \texttt{Pendulum-v1} (left heat map) and \texttt{Hopper-v4} (right heat map) environments. As the temperature increases the mode of the stochastic policy is robust to higher levels of action noise $\epsilon$.}
    \label{fig:pendulum}
\end{figure*}

We modify the \texttt{Pendulum-v1} environment as follows to incorporate robustness concerns:
(i) the initial state is the still, upright position; (ii) the constraints consist of angles with magnitude beyond $90\degree$ and the penalty is $90$; and (iii) the reward is the squared angular difference to a target angle of $40 \degree$, which is outside of the viability kernel since the agent exerts bounded torque.

The results are shown in Fig.~\ref{fig:pendulum}. All policies lean towards the target state but avoid leaving the viability kernel and reaching the constraints. 
The sufficient penalty emulates the boundary of the viability kernel, which reduces the effective number of available actions when leaning to one side.
This pushes entropy-regularized policies away from the target state, and they learn to stabilize angles closer to $0$ as $\alpha$ increases --- the maximum entropy policy keeps the pendulum upright.
The results show a robustness--performance trade-off between staying upright and leaning as far as possible towards the target, which is controlled by the temperature $\alpha$.
Furthermore, the mode of the entropy-regularized policy can cope with significantly higher action disturbances when trained with higher temperatures.

\subsubsection{Hopper}

We repeat the same experiment as in the previous section for a modified \texttt{Hopper-v4} environment \citep{towers2023gymnasium}. We modify the environment by penalizing the ``unhealthy'' states with a penalty of $p=500$. The results are shown in Fig.~\ref{fig:pendulum}. Increasing the temperature improves the robustness to additional action noise. However, the learned gait is slower, hinting at a performance--robustness trade-off for this environment (see Appendix~\splittableref{apdx:result_rl}{A.2}). Interestingly, as the temperature is increased, the training finds two distinct robust behaviors: one is the intended hopping forward; the other is standing still and only collecting the healthy reward, which is arguably the most robust behaviour.

Our experiments inform hyperparameter settings for \ac{RL} practitioners: while entropy regularization leads to robustly safe policies, high temperatures (or minimum entropy constraints\,\citep{HZHT2018}) can make parts of the state space unreachable, lead to conservative policies, and may even entirely prevent task completion as seen in the Hopper example.

\section{Conclusion}

We study the interaction between entropy regularization and state constraints in \ac{RL} and reveal empirically that this favors policies that are constraints-avoiding and robust to increased action noise, as they preserve an expected long-term number of viable actions.
We also show both in theory and in practice how to approximate the constraints with failure penalties.
In particular, the mode of the policy --- which is often what is deployed after training completion --- is recovered exactly by penalties above a finite threshold.
\par
The connection between entropy regularization with constraints and control-theoretic robustness is novel, to the best of our knowledge.
This study identifies the phenomenon, its relevance for \ac{RL}, and opens many interesting avenues for future work.
A particularly promising one is the systematic study of the identified robustness.
Indeed, we hypothesize that entropy regularization with constraints induces a kind of \emph{soft constraints tightening}; that is, restricts the optimization domain to controllers that go away from the constraints with at least some given probability.
Such a result would enable identifying ``softly invariant sets'': subsets of the viability kernel that are control invariant under a robustly safe controller (but not directly under entropy-regularized controllers, as they have full action support) and contain the entropy-regularized controller's trajectory with high probability.
This would draw a clear theoretical bridge between entropy-regularized constrained \ac{RL} and robust control through constraints tightening.
An alternative would be identifying a noise model to which the modes of entropy-regularized, constrained policies are robust.
More generally, it would be interesting to find other regularization terms that promote robustness and that are amenable to \ac{RL} beyond the cumulative entropy, following ideas from \cite{geist2019theory}. Such regularizers could enable novel robustness properties with rigorous guarantees, and perhaps help training policies that are less sensitive to the sim-to-real gap.
\par
In the meantime, we expect our findings to inform practitioners when applying \ac{RL} algorithms such as \ac{SAC}.
While entropy regularization has mainly been developed as an exploration mechanism\,\citep{HZA2018}, it biases the policy to robustness to action noise if one uses constraints penalties. 
This understanding enables principled decisions when tuning the temperature and penalties, for instance by discouraging the common practice of annealing the temperature if robustness is a concern.

\ifarxiv
\subsubsection*{Acknowledgments}
\else
\begin{credits}
\subsubsection{\ackname} 
\fi
The authors thank Zeheng Gong for help with the empirical results. Simulations were performed with computing resources granted by RWTH Aachen University under project rwth1626.
This work has been supported by the Robotics Institute Germany, funded by BMBF grant 16ME0997K.

\ifarxiv
\subsubsection*{Disclosure of interests}
\else
\subsubsection{\discintname}
\fi
The authors have no competing interests to declare.
\ifarxiv\else
\end{credits}
\fi

%
%
%
%
\ifarxiv
\bibliographystyle{unsrtnat}
\else
\bibliographystyle{splncs04}
\fi
\bibliography{references.bib}


\ifarxiv

\appendix
\clearpage
\section{Extended empirical results}\label{apdx:cliff}

We provide some additional empirical results to support our findings in Section~\ref{pos:empirical results}.

\subsection{Cliff walking}

\paragraph{Policy entropy} In Fig.~\ref{fig:max_ent}, we show the immediate entropy of the optimal policy in each state for both the maximum entropy policy and the regularized policy.
Fig.~\ref{fig:max_ent}.a shows the effect of constraints on the maximum entropy policy, pushing away trajectories. In the regularized setting (Fig.~\ref{fig:max_ent}.b) the optimal policy trades entropy for return.

\begin{figure}[!h]
    \centering
    \includegraphics{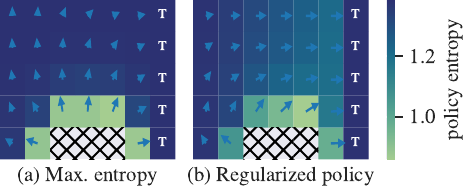}
    \caption{\textbf{Immediate entropy of the maximum entropy policy (a) and of the regularized policy (b):} To avoid states with unviable actions (and thus low maximum entropy as per \eqref{eq:immediate entropy bound}), $\maxEntPolicy$ moves away from the constraint (a). The regularized policy ($\alpha=4$) trades entropy for return and stays closer to the constraints (b).
    The colormap is the entropy of the optimal policy and the blue arrows the expected actions. Shorter arrows thus mean a higher action distribution entropy.}
    \label{fig:max_ent}
\end{figure}

\clearpage
\subsection{RL benchmarks}\label{apdx:result_rl}
\paragraph{Trade-off between robustness and return} In Fig.~\ref{fig:returns} we show for different values of $\alpha$ the total return of the trained policies in the undisturbed pendulum and hopper environments. The result shows that returns decrease as $\alpha$ increases, that is, as policies become more robust.

For the pendulum, the return is directly correlated to the angle at which it stabilizes.
The decrease in return reflects a more upright stabilized position (cf.~Fig~\ref{fig:pendulum}).
Given the finite torque the agent can apply to the pendulum, the upright equilibrium is the most robust viable position in this environment and, indeed, for higher temperatures the agent stabilizes the pendulum closer to its equilibrium. 

A similar trend can be observed for the hopper environment.
The overall return decreases as the agent learns a smaller but more robust gait. 
For higher temperatures, we observe two different learning outcomes, depending on the randomness of the learning process. 
One where the agent still learns to walk slowly forward and another where the agents stands still.
This example already indicates that training with high entropy and penalties can lead to undesired outcomes where the agent cannot complete its task anymore since doing so would be too risky.

\begin{figure}[!h]
    \centering
    \includegraphics{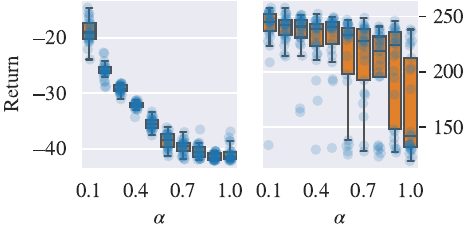}
    \caption{\textbf{Effect of the temperature on return:} Evaluation on the \texttt{Pendulum-v1} (left) and \texttt{Hopper-v4} (right) environments. As the temperature increases, the mode of the policy trades return for robustness and gets more conservative.}
    \label{fig:returns}
\end{figure}

\paragraph{Success rate under disturbances} We show in Fig.~\ref{fig:full_disturbance_heatmaps_pendulum} and Fig.~\ref{fig:full_disturbance_heatmaps_hopper} extended versions of the heat maps shown in Fig.~\ref{fig:pendulum}. 
They show the dependency between the training temperature $\alpha$ and the maximum magnitude of disturbances $\epsilon$ the agent can withstand at evaluation.
Generally, as the training temperature increases, so does the robustness of the mode of the learned policy.
For the hopper environment (Fig.~\ref{fig:full_disturbance_heatmaps_hopper}) the trend is intact but our results exhibit high variance over different training runs, especially for high temperatures. We attribute this to suboptimal solutions found during learning. When inspecting the trained policy's behavior we see that sometimes the agent learns to stand still in order to increase robustness.

\begin{figure}[!ht]
    \centering
    \resizebox{\textwidth}{!}{
    \includegraphics{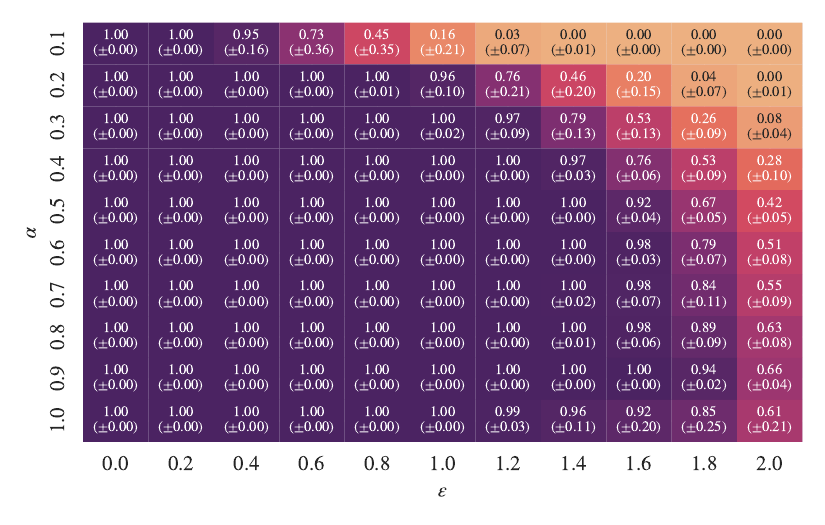}
    }
    \caption{\textbf{Robustness to disturbances for the pendulum environment:} Evaluation of a policy trained with different temperatures $\alpha$. We show the mean and standard deviation of the success rate evaluated over training with $25$ random seeds. As the magnitude $\epsilon$ of the disturbance is increased, the mode policy retains higher success rates for higher temperatures.}
    \label{fig:full_disturbance_heatmaps_pendulum}
\end{figure}

\begin{figure*}[!ht]
    \centering
    \resizebox{\textwidth}{!}{
    \includegraphics{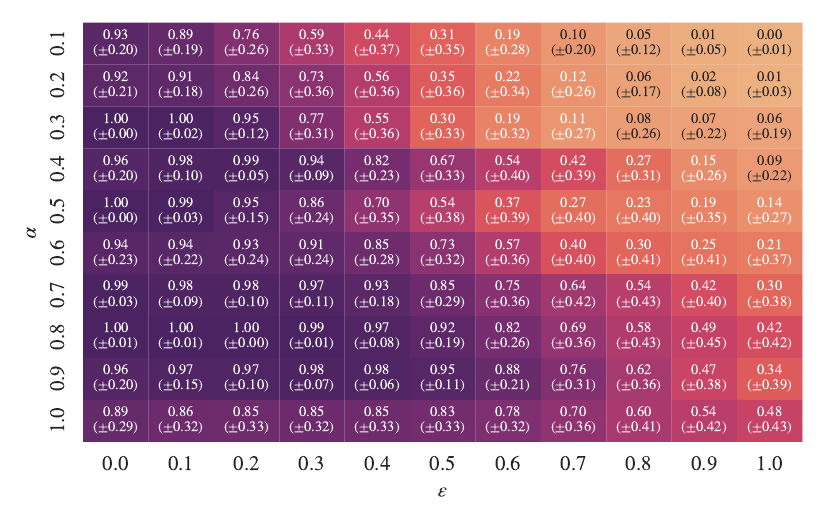}}
    \caption{\textbf{Robustness to disturbances for the hopper environment:} Evaluation of a policy trained with different temperatures $\alpha$. We show the mean and standard deviation of the success rate evaluated over training with $25$ random seeds. As the magnitude $\epsilon$ of the disturbance is increased, the mode policy retains higher success rates for higher temperatures.}
\label{fig:full_disturbance_heatmaps_hopper}
\end{figure*}
\clearpage
\section{Experimental details}\label{apdx:experiment_details}
All experiments in Sec.~\ref{ssec:rl} were conducted on a compute cluster. In total, around 50 core-hours on an Intel Xeon 8468 Sapphire and 450 core-hours on an NVIDIA H100 have been used to produce the results.
The code will be published upon acceptance and is also part of the supplementary material of the submission.

The hyperparameters for the tabular value iteration to compute the policies in Sec.~\ref{ssec:cliff} are given in Table~\ref{tab:hyperparameters-tabular}.

We train the neural network policies in Sec.~\ref{ssec:rl} using the \ac{SAC} implementation of \citet{huang2022cleanrl} with different fixed temperature values (no automatic tuning). The hyperparameters are reported in Table~\ref{tab:hyperparameters-deep-rl}.
For each $\alpha$, we train on $25$ random seeds and evaluate the trained policy in terms of return and robustness. 
Training curves are shown in Fig.~\ref{fig:training}.
To evaluate robustness, we take the mode of each trained policy and add a disturbance sampled from a uniform distribution $\mathcal{U}(-\epsilon, \epsilon)$ to the action at each environment step. An episode is successful if the agent abides by the constraint despite the input disturbance. We test each trained policy for $100$ episodes and report the success rate and return.

\subsection{Environments}
\paragraph{Fenced Cliff}

The Fenced Cliff environment is illustrated in Fig.~\ref{fig:fenced_cliff} and Fig.~\ref{fig:max_ent}. The agent starts in the bottom-left corner with the goal of reaching one of the target states in the right column (marked as \textbf{T}). In general, the agent can take one of four actions to move to adjacent states (excluding diagonal moves) or remain in place if the action would lead out of bounds. However, actions that would lead in the failure set are not available. Consequently, states bordering the cliff have a smaller action space than all other states. In particular, the agent cannot enter the cliff. The agent receives a reward of $-1$ (plus the entropy reward) for every action outside a target state. Reaching a target state does not end the episode. Instead, the agent keeps collecting entropy rewards in target states and remains in its current state for all four actions.

\paragraph{Unconstrained Cliff}

The Unconstrained Cliff environment is illustrated in Fig.~\ref{fig:robust_cliff}. It relaxes the Fenced Cliff environment by allowing the agent to enter the cliff, that is, the failure set (marked as \textbf{C}). Transitioning into the cliff from a non-failure state results in a penalty of $-p$ where $p$ varies between experiments. Similar to target states, entering the cliff does not end the episode but traps the agent there indefinitely, as all four actions in the cliff map back to it. In particular, the agent keeps collecting entropy rewards inside failure states.

\paragraph{Pendulum}

We use the standard \texttt{Pendulum-v1} environment \citep{towers2023gymnasium} with a modified reward function
\begin{equation}
    r(x, a) = -(\theta - \theta_{\mathrm{target}})^2,
\end{equation}
that incentivizes the agent to move towards the edge of the viability kernel, as illustrated in Fig. \ref{fig:pendulum_sketch}. The agent starts in the upright position and applies control inputs in $[-2; 2]$ exercising a torque $\tau$ that effectively controls the angle $\theta$. We use a target angle of $\theta_\mathrm{target} = 40\degree$ relative to the upright position. The failure set comprises those states with an angle of at least $90\degree$ and entering the failure set results in a penalty of $90$. The still, upright position is therefore the most robust state as it maintains the largest distance to the constraint set. Episodes terminate after 200 steps or when entering the failure set.

\begin{figure}[!h]
    \centering
    \begin{tikzpicture}

\coordinate (pivot) at (0,0);
\coordinate (bob) at ({1.25*cos(110)},{1.25*sin(110)}); 

\coordinate (target) at ({1.5*cos(135)},{1.5*sin(135)});
\draw[dashed] (pivot) -- (target) node[anchor=north] {$\theta_{\mathrm{target}}$};

\filldraw [black] (pivot) circle (2pt);

\draw[line width=5pt, blue, opacity=0.5,line cap=round] (pivot) -- (bob);

\draw[->] (0,1.) arc [start angle=90, end angle=110, radius=1.];

\node at (-0.2, 1.2) {$\theta$};

\draw[dashed] (pivot) -- (0,1.5) node[anchor=south] {};

\fill[pattern=north west lines, pattern color=red] (-1.25,0) rectangle (1.25,-.5);
\draw[] (-1.25,0) -- (1.25,0) node[anchor=north] {$\constraintSet$};

\draw[->, very thick] (0,0.3) arc [start angle=90, end angle=420, radius=0.3];
\node at (0.4, 0.4) {$\tau$};

\draw[dashed] (1.25,0) arc [start angle=0, end angle=180, radius=1.25];

\end{tikzpicture}
    \caption{A sketch of the pendulum environment with target angle $\theta_{\mathrm{target}}$ and constraint set $\constraintSet$.}
    \label{fig:pendulum_sketch}
\end{figure}

\paragraph{Hopper}

We use the standard \texttt{Hopper-v4} MuJoCo environment \citep{towers2023gymnasium,todorov2012mujoco}. This environment has a built-in notion of the agent being \emph{healthy}, defined as fulfilling a set of constraints under which the agent maintains a human-like upright position. The environment thus admits a natural failure set, namely the set of states in which the agent is not healthy. In our modification of the environment, entering the failure set results in a penalty of $500$. Other than that, we use the default reward function which incentivizes moving forward and maintaining a healthy state while avoiding large control inputs. To that end, the agent applies control inputs in $[-1; 1]^3$ exerting torques on its three joints, thereby controlling movement and posture. Episodes terminate after 1000 steps or when entering the failure set.

\subsection{Implementation details}

\paragraph{Model selection for SAC}

For our study on the robustness of the trained policies in the Pendulum and Hopper environments, we choose the policy that achieved the highest objective value during training, that is, the weighted sum of returns and entropy.

\paragraph{Early termination for SAC}

In contrast to the cliff walking experiments, we terminate an episode whenever we enter the failure set. We do this because this is how the standard benchmarks are implemented. However, early termination affects the entropy regularized value function since the agent cannot collect any more entropy in the failure set and therefore reduces the discounted reward. Early termination is therefore equivalent to an additional penalty and reduces $\optimalPenalty$ where this reduction automatically scales with the temperature.

\begin{table*}[!hb]
  \caption{Hyperparameters for tabular RL experiments on Fenced Cliff and Unconstrained Cliff.}
  \label{tab:hyperparameters-tabular}
  \centering
  \begin{tabularx}{\textwidth}{>{\raggedright\arraybackslash}m{0.4\textwidth} >{\raggedright\arraybackslash}m{0.6\textwidth}}
    \toprule
    \textbf{Parameter} & \textbf{Value}  \tabularnewline
    \midrule
    Discount factor & $0.95$ \tabularnewline
    Maximum number of iterations & $1,000$ \tabularnewline
    Convergence tolerance & $0.00001$ \tabularnewline
    \bottomrule
  \end{tabularx}
\end{table*}

\begin{table*}[!hb]
  \caption{Hyperparameters for deep RL experiments on Pendulum and Hopper.}
  \label{tab:hyperparameters-deep-rl}
  \centering
  \begin{tabularx}{\textwidth}{>{\raggedright\arraybackslash}m{0.4\textwidth} >{\raggedright\arraybackslash}m{0.3\textwidth} >{\raggedright\arraybackslash}m{0.3\textwidth}}
    \toprule
    \textbf{Parameter} & \textbf{Value (Pendulum)} & \textbf{Value (Hopper)}  \tabularnewline
    \midrule
    Penalty & $90$ & $300$ \tabularnewline
    Total steps & $250,000$ & $1,000,000$ \tabularnewline
    Buffer size & $250,000$ & $1,000,000$ \tabularnewline
    Batch size & $256$ & $256$ \tabularnewline
    Discount factor & $0.99$ & $0.99$ \tabularnewline
    Target smoothing coefficient & $0.005$ & $0.005$ \tabularnewline
    Policy learning rate & $0.0003$ & $0.0003$ \tabularnewline
    Q learning rate & $0.001$ & $0.001$ \tabularnewline
    Optimizer & Adam & Adam \tabularnewline
    Policy update frequency & $2^{-1}$ & $2^{-1}$ \tabularnewline
    Target network update frequency & $1$ & $1$ \tabularnewline
    \bottomrule
  \end{tabularx}
\end{table*}

\begin{figure*}[!ht]
    \centering
    \resizebox{\textwidth}{!}{
    \includegraphics{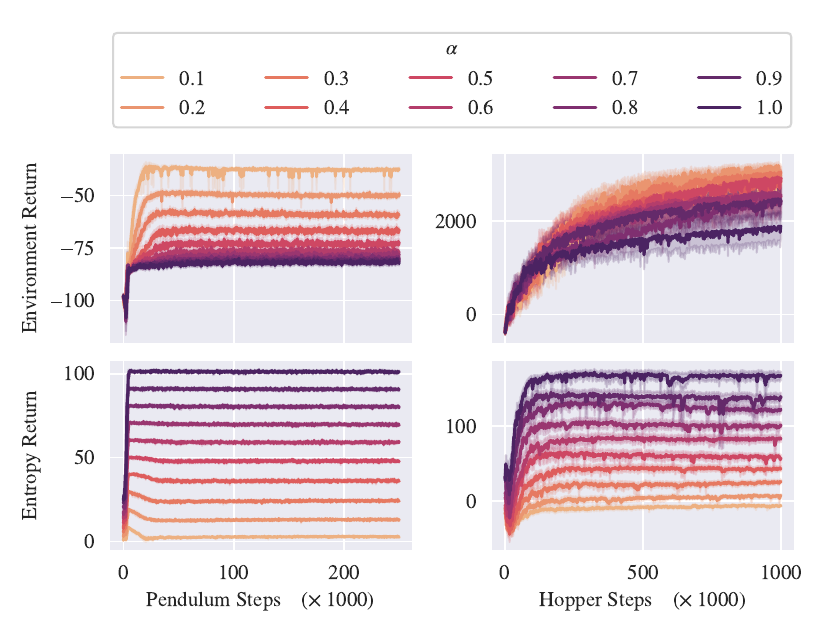}
    }
    \caption{Mean undiscounted environment return (top row) and discounted entropy return (bottom row) during training in the \texttt{Pendulum-v1} (left column) and \texttt{Hopper-v4} (right column) environments with bootstrapped $95\%$ confidence interval over 25 random seeds for each temperature $\alpha$.}
    \label{fig:training}
\end{figure*}

\clearpage
\section{Relation to the off-policy safety measure of \texorpdfstring{\citet{HRTB2020}}{Heim et al. (2020)}}\label{apdx:counterexample off policy metric}

In this section we discuss potential problems when using, instead of the cumulative discounted entropy $S$ in Definition~\ref{def:s-robustness}, the cumulative discounted safety measure, which is introduced in \citet{HRTB2020}.
This safety measure defines the ``natural robustness of a state $x\in\viabilityKernel$'' as the number of viable actions available in that state, that is, the cardinality of $\viableSet[x]$.
\par
A first observation is that the safety measure of a state, $\Lambda(x)$, is closely connected to the entropy of the uniform viable policy $u_V$ through $\entropy(u_V(\cdot\mid x) = \ln(\Lambda(x))$.
In other words, using the (logarithm of the) safety measure as a reward means maximizing the quantity \begin{equation*}
    \bar T(x,\pi) = \E_\pi\left[\sum_{t=0}^\infty \entropy(u_V(\cdot\mid X_t))\right],
\end{equation*}
which differs from $\bar \discountedEntropy$ defined in \eqref{eq:cumulative entropy} by replacing the policy used to compute the entropy by $u_V$.
It might --- at first sight --- seem like a meaningful quantification of robustness, since it directly measures the ``long term number of viable actions'' available by following a policy.
\par
Unfortunately, this quantification is not satisfactory.
Indeed, it allows, for instance, visiting states where most viable actions lead to only having only few viable options remaining, as long as at least one path guarantees many choices.
A compelling example illustrating why a policy with a high such $\bar T$ should not be called robust is available below.
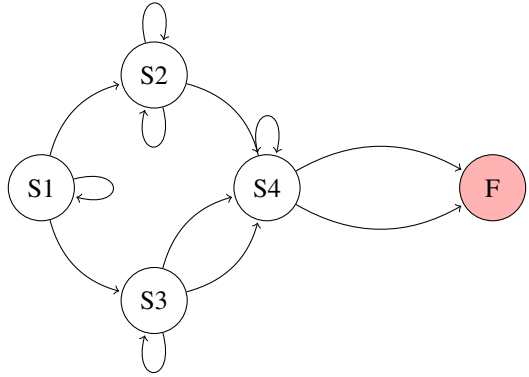
\begin{figure}[!h]
    \centering
    \begin{tikzpicture}[shorten >=1pt, node distance=1.5cm and 3cm, on grid, auto]
       \node[state] (s1) at (0, 0)   {S1};
       \node[state] (s2) at (1.5, 1.5) {S2};
       \node[state] (s3) at (1.5, -1.5){S3};
       \node[state] (s5) at (3, 0) {S4};
       \node[state, fill=red!30] (s4) [right=of s5] {F};
    
       \path[->]
        (s1) edge[bend left, above] node {} (s2)
             edge[bend right, below] node {} (s3)
             edge[loop right] node {} (s5)
        (s2) edge[bend left, above] node {} (s5)
             edge[loop above] node {} (s2)
             edge[loop below] node {} (s2)
        (s3) 
             edge[bend right=30] node {} (s5)
             edge[bend left=30] node {} (s5)
             edge[loop below] node {} (s3)
        (s5) edge[bend left=30] node {} (s4)
             edge[bend right=30] node {} (s4)
             edge[loop above] node {} (s5)
             ;
    \end{tikzpicture}
    \caption{\textbf{Counterexample for an off-policy metric of robustness based on entropy.}
    The constraint set is $\constraintSet = \{F\}$.}
    \label{fig:counterexample}
    \vspace{-3mm}
\end{figure}
\begin{example}
Consider the Markov decision process of Fig~\ref{fig:counterexample}, and two policies: $\pi_1$, that goes from $S1$ to $S2$, and $\pi_2$, that goes from $S1$ to $S3$.
Arguably, in the absence of a noise model, a meaningful notion of robustness should say that $\pi_1$ is more robust than $\pi_2$; indeed, in the presence of action perturbations, both may end up in $S4$, but $\pi_1$ takes a path with more ``chances'' to avoid --- or delay --- going there.
Yet, the value of $\bar T(S_1,\pi_2)$ is the same as that of $\bar T(S_1,\pi_1)$, since the former is increased by the additional action from $S3$ to $S4$.
Choosing this metric thus classifies $\pi_2$ as equally robust as $\pi_1$, which is the not the desired outcome.
\end{example}

The property of the above metric allowing this example is that the policy only needs to visit states with a high number of viable actions, without considering whether those actions preserve a high number of viable actions themselves, as long as at least one does.
In other words, the policy does not need to commit to the viable actions that contribute to the value of its metric, and the evaluation of the robustness via $\entropy(u_V(\cdot\mid X_t))$ is myopic.
There are two simple solutions to this issue.
The first is to use an \emph{on-policy} metric of robustness, which is what we do in the present work by replacing $u_V$ in the previous proposition by the policy itself, yielding $\bar \discountedEntropy$.
Then, there are policies that visit $S2$ with a metric higher than the value achieved by any policy that visits $S3$.
The second option is to use a quantity that is already sensitive to the long-term robustness of a state instead of $\entropy(u_V(\cdot\mid X_t))$, such as for instance $I(x) = \E_{u_V}[\sum_{t=0}^\infty \gamma^t \entropy\{u_V(\cdot\mid X(t;x,u_V))\}]$, and to define robustness as having a high value of $\bar U(x,\pi) = \E_\pi\left[\sum_{t=0}^\infty \gamma^t I(X_t)\right]$.
This raises the problem, however, that computing this $I$ involves solving a reinforcement learning problem itself.

\clearpage
\section{Proofs}\label{apdx:proofs}
In all of this section, we identify real functions defined on $\viableSet$ with vectors in $\R^{|\viableSet|}$, and use operators defined on the functions on their vector representations indistinctly.

\subsection{Proof of Theorem~\ref{thm:constrained increasing alpha}}
\begin{proof}
    For $\alpha\in\Rp$, consider the operators $B^\star$ and $B^\star_\alpha$ defined on $\R^{\viableSet}$ by, for all $h\in\R^{\viableSet}$ and $(x,a)\in\viableSet$, \begin{align*}
        (B^\star h)(x,a) &= \gamma\ln\left[\sum_{b\in\viableSet[x]} \exp[h(x^\prime,b)]\right],\\
        (B^\star_\alpha h)(x,a) &=\frac1\alpha r(x,a) + \gamma\ln\left[\sum_{b\in\viableSet[x]} \exp[h(x^\prime,b)]\right],
    \end{align*}
    where we introduced the shorthand $x^\prime = f(x,a)$.
    The operator $B^\star$ is the optimal soft-Bellman operator associated with \eqref{eq:max ent OCP}, and $B^\star_\alpha$ is that associated with \eqref{eq:constrained OCP}, scaled by the factor $\frac1\alpha$.
    Let now $R_\alpha = \frac1\alpha \constrainedQValue{\alpha}$.
    By properties of the optimal Bellman operator, $R_\alpha$ and $\maxEntQValue$ are the unique fixed points of $B^\star_\alpha$ and $B^\star$, respectively.
    
    Let us fix a strictly increasing sequence $(\alpha_n)_{n\in\N}$, and define for conciseness $R_{\alpha_n} = R_n$ and $B^\star_{\alpha_n} = B^\star_n$.
    The sequence $(R_n)_{n\in\N}$ is clearly bounded.
    Therefore, there exists $R\in\R^{|\viableSet|}$ and $(m_n)_{n\in\N}$, strictly increasing, such that $R_{m_n}\to R$ as $n\to\infty$.
    We show $B^\star (R) = R$.
    For all $n\in\N$,\begin{align*}
        \lVert R - B^\star R\rVert 
            &\leq \lVert R - R_{m_n}\rVert + \lVert B^\star_{m_n}(R_{m_n}) - B^\star (R_{m_n})\rVert + \lVert B^\star (R_{m_n}) - B^\star (R)\rVert\\
            &\leq \lVert R - R_{m_n}\rVert + \frac1{\alpha_n} \lVert r\rVert +  \lVert B^\star (R_{m_n}) - B^\star (R)\rVert,
    \end{align*}
    where we used the identity $B^\star_{m_n}(R_{m_n}) = R_{m_n}$ in the first inequality.
    By continuity of $B^\star$, we have $B^\star (R_{m_n})\to B^\star (R)$, and thus the right-hand side converges to $0$ as $n\to\infty$.
    We deduce that $R = B^\star (R)$.
    But then, by uniqueness of the fixed point of $B^\star$, $R = \maxEntQValue$.
    This shows that the bounded sequence $(R_n)_{n\in\N}$ admits a unique accumulation point $\maxEntQValue$, and thus converges to $\maxEntQValue$.
    We have thus shown that for any strictly increasing sequence $(\alpha_n)_{n\in\N}$, $R_{\alpha_n}\to\maxEntQValue$ as $n\to\infty$.
    This concludes the proof by the sequential characterization of convergence of a function.
\end{proof}

\subsection{Proof of Corollary~\ref{clry:robustness}}
\begin{proof}
    We focus on monotonicity since convergence follows immediately from Theorem~\ref{thm:constrained increasing alpha} and \eqref{eq:soft optimal policy}.
    Let $\alpha$ and $\beta$ be in $\Rnn$, with $\alpha\leq\beta$.
    Let $x\in\viabilityKernel$.
    By definition of $\constrainedPolicy{\alpha}$ and $\constrainedPolicy{\beta}$, we have both \begin{align*}
        \bar\return(x,\constrainedPolicy{\alpha}) + \alpha \bar\discountedEntropy(x,\constrainedPolicy{\alpha}) &\geq \bar\return(x,\constrainedPolicy{\beta}) + \alpha \bar\discountedEntropy(x,\constrainedPolicy{\beta}),\\
        \bar\return(x,\constrainedPolicy{\alpha}) + \beta\bar\discountedEntropy(x,\constrainedPolicy{\alpha}) &\leq \bar\return(x,\constrainedPolicy{\beta}) + \beta \bar\discountedEntropy(x,\constrainedPolicy{\beta}).
    \end{align*}
    Taking the difference and rearranging yields \begin{equation*}
        (\alpha-\beta) (\bar\discountedEntropy(x,\constrainedPolicy{\alpha}) - \bar\discountedEntropy(x,\constrainedPolicy{\beta})) \geq 0.
    \end{equation*}
    We deduce that the two factors have the same sign, i.e., $\bar\discountedEntropy(x,\constrainedPolicy{\alpha}) \leq \bar\discountedEntropy(x,\constrainedPolicy{\alpha})$.
    Since this is valid for all $x\in\viabilityKernel$, this concludes the proof.
\end{proof}

\subsection{Proof of Theorem~\ref{thm:approximation RCOCP}}
We begin with two preliminary technical results on dynamic indicators.
For any trajectory $\tau = [(x_n,a_n)]_{n\in\N}\subset\stateActionSpace^\N$, we introduce the notations \begin{align*}
    \tc(\tau) &= \min\{t\in\N\mid x_t\in\constraintSet\},\\
    \tr(\tau) &= \min\{t\in\N\mid x_t\in\constraintSet\land x_{t+1}\notin\constraintSet\},
\end{align*}
with the convention $\min\emptyset = \infty$.
\begin{lemma}[Characterization of dynamic indicators]\label{lemma:characterization dynamic indicator}
    Let $c:\stateActionSpace\to\Rnn$.
    Then, $c$ is dynamic indicator of $\constraintSet$ if, and only if, $c_{\mid \viableSet} = 0$ and for any trajectory $\tau = [(x_t,a_t)]_{t\in\N}\subset\stateActionSpace^\N$ such that $x_0\in\viabilityKernel$ and $\tc(\tau)<\infty$, there exists $t\leq \tr(\tau)$ such that $c(x_t, a_t) > 0$.
\end{lemma}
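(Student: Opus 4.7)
The plan is to prove the biconditional in both directions, leveraging that the dynamics are deterministic so that stochasticity enters only through the policy, and using Definition~\ref{def:dynamic indicator} in the contrapositive form ``safe $\Leftrightarrow$ zero expected risk.''

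For the forward direction, assume $c$ is a dynamic indicator. To show $c_{\mid\viableSet}=0$, fix $(x,a)\in\viableSet$; both $x$ and $f(x,a)$ then lie in $\viabilityKernel$, so one can build a policy $\pi$ that plays $a$ deterministically at $x$ and follows a safe continuation from $f(x,a)$. The resulting $\pi$ belongs to $\viablePolicies(x)$, hence $\E[\risk(x,\pi)]=0$ by Definition~\ref{def:dynamic indicator}, and nonnegativity of $c$ forces $c(x,a)=0$. For the trajectory condition, fix $\tau$ with $x_0\in\viabilityKernel$ and $\tc(\tau)<\infty$, and consider the (deterministic) policy $\pi_\tau$ that plays $a_t$ at $x_t$ for every $t\leq\tr(\tau)$ and then switches to a safe extension starting at $x_{\tr(\tau)+1}$. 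This policy is not in $\viablePolicies(x_0)$ because its realized trajectory visits $\constraintSet$ at time $\tc(\tau)\leq\tr(\tau)$, so $\E[\risk(x_0,\pi_\tau)]>0$; since the first part of the proof forces every $c(X_t,A_t)$ with $t>\tr(\tau)$ to vanish (the tail is followed by a safe policy, so stays in $\viableSet$), some $t\leq\tr(\tau)$ must satisfy $c(x_t,a_t)>0$.

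For the backward direction, assume both conditions and fix $x\in\viabilityKernel$ and $\pi\in\policies$. If $\pi\in\viablePolicies(x)$, every realized trajectory has $(X_t,A_t)\in\viableSet$ for all $t\in\N$, so $c(X_t,A_t)=0$ almost surely by $c_{\mid\viableSet}=0$, giving $\E[\risk(x,\pi)]=0$. If instead $\pi\notin\viablePolicies(x)$, the event that the realized trajectory visits $\constraintSet$ has positive probability; on this event the trajectory condition yields a first time $t\leq\tr<\infty$ with $c(X_t,A_t)>0$, contributing a strictly positive amount to $\risk(x,\pi)$ on a positive-probability set, so $\E[\risk(x,\pi)]>0$.

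The main obstacle is constructing the ``safe extension'' used in the forward trajectory argument when $x_{\tr(\tau)+1}\notin\viabilityKernel$: no safe policy from an unviable state exists, so the hybrid policy cannot simply become safe after $\tr(\tau)$. This can be handled by iterating the hybrid construction along successive first-exit times (each forced to recur by unviability of the intermediate state) and using $\gamma$-discounting to pass to a limit, or by invoking the paper's convention in Remark~\ref{rmk:non terminal constraints} to confine attention to the clean setting where the hybrid succeeds at the first switch. A minor secondary point is the measurable selection of a positive-$c$ time in the backward direction, which is immediate since the set of such times along any sample path is a non-empty subset of $\N$ and hence admits a minimum.
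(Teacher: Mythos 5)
Your backward direction (the two stated conditions imply that $c$ is a dynamic indicator) and your argument for $c_{\mid\viableSet}=0$ coincide in substance with the paper's. The genuine gap is precisely the obstacle you flag at the end, and it sits in the proof of the trajectory condition: your hybrid policy needs a safe extension from $x_{\tr(\tau)+1}$, but $\tr(\tau)$ only marks the first exit from $\constraintSet$, so $x_{\tr(\tau)+1}$ need not lie in $\viabilityKernel$ and no safe continuation from it exists. Neither of your repairs closes this. Remark~\ref{rmk:non terminal constraints} does not assert that states reached after leaving $\constraintSet$ are viable, so it cannot be ``invoked'' to confine attention to the clean case; and the ``iterate along first-exit times and pass to a limit'' idea leaves unexplained why the expected risk of the limiting object would be supported only on times $t\leq\tr(\tau)$, which is exactly what you must show to conclude $c(x_t,a_t)>0$ for some such $t$. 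A second, related defect: ``plays $a_t$ at $x_t$ for every $t\leq\tr(\tau)$'' is not a well-defined element of $\policies$ when $\tau$ revisits a state with different actions. Once you select one action per state, the realized trajectory of the resulting Markov policy is no longer $\tau$; it can even loop inside $\viabilityKernel$ without ever reaching $\constraintSet$, in which case your claim $\pi_\tau\notin\viablePolicies(x_0)$ fails and the whole argument collapses.

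The paper avoids both problems with a different construction. It does not start the auxiliary policy at $x_0$; instead it locates the first critical pair of $\tau$, namely $(x,a)=(x_{T_\mathrm{V}+1},a_{T_\mathrm{V}+1})$ where $T_\mathrm{V}$ is the last time up to $\tc(\tau)$ at which $\tau$ is in $\viableSet$, and defines a deterministic selector $\theta$ that forces action $a$ at $x$, prescribes viable actions at every other state of $\viabilityKernel$, and prescribes actions taken by $\tau$ at the unviable states $\tau$ visits. Because the very first transition from $x$ leaves $\viabilityKernel$, the induced policy is automatically unsafe from $x$ (no looping issue), so its expected risk is positive by the dynamic-indicator property; and because this policy only ever visits pairs in $\viableSet$ (where $c$ vanishes) or pairs of $\tau$ occurring no later than $\tr(\tau)$, the positive risk must come from some $(x_t,a_t)$ with $t\leq\tr(\tau)$. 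No safe extension from a possibly unviable state is ever required. To repair your proof you should adopt this device (anchor the unsafe policy at the first critical pair and route all other viable states through viable actions) rather than the first-switch hybrid.
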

\begin{proof}
    We first show the converse implication.
    Let $x\in\viabilityKernel$ and $\pi\in\policies$.
    If $\pi\notin\viablePolicies(x)$, then let $\tau=[(x_n,a_n)]_{n\in\N}$ be a trajectory starting from $x$ and generated by $\pi$ with positive probability such $\tc(\tau)<\infty$.
    Let $t\leq\tr(\tau)$ such that $c(x_t,a_t)>0$.
    Then, \begin{align*}
        \bar\risk(x,\pi) 
            \geq L\cdot\sum_{u=0}^\infty \gamma^u c(x_u,a_u)
            \geq L\cdot \gamma^t c(x_t,a_t)
            > 0,
    \end{align*}
    where we defined $L=\Pbb[((X(t;x,\pi),A(t;x,\pi)))_{t\in\{0,\dots,T\}} = ((x_t,a_t))_{t\in\{0,\dots,T\}}]>0$ for conciseness.
    Conversely, if $\bar \risk(x,\pi) > 0$, then there must exist a trajectory $\tau=[(x_t,a_t)]_{t\in\N}$ starting from $x$ and with positive probability such that $\sum_{t=0}^\infty \gamma^t c(x_t,a_t) > 0$.
    But there must then be a $t\in\N$ such that $(x_t,a_t)\notin\viableSet$, by assumption on $c$.
    By definition, the trajectory $\tau$ reaches $\constraintSet$ in finite time after that time $t$.
    Since $\tau$ has positive probability by following $\pi$, this shows that $\pi\notin\viablePolicies(x)$ and shows the implication.

    For the converse implication, assume that $c$ is a dynamic indicator of $\constraintSet$.
    Let $(x,a)\in\viableSet$, and take any policy $\pi\in\viablePolicies$ such that $\pi(a\mid x) = 1$ (such a policy exists by definition of the viability kernel).
    Then, $0 = \bar\risk(x,\pi)\geq c(x,a)$, and thus $c_{\mid\viableSet} = 0$.
    Next, let $\tau= [(x_t,a_t)]_{t\in\N}$ be such that $x_0\in\viabilityKernel$ and $\tc(\tau)<\infty$.
    Introduce $T_\mathrm{V} = \max\{t\in\N\mid t\leq\tc(\tau)\land (x_t,a_t)\in\viableSet\}$, and define $(x,a)=(x_{T_\mathrm V+1},a_{T_\mathrm V+1})$.
    For every $y\in\stateSpace$, define the set of actions that $\tau$ takes in state $y$,\begin{equation*}
        \tau(y) = \{b\in\actionSpace\mid \exists t\in\N, (x_t,a_t) = (y,b)\}.
    \end{equation*}
    Let $\theta:\stateSpace\to\actionSpace$ be a map such that
    \begin{enumerate}
        \item $\theta(x) = a$;
        \item for all $y\in\viabilityKernel$ with $y\neq x$, $\theta(y)\in\viableSet[y]$;
        \item for all $y\notin\viabilityKernel$ with $\tau(y)\neq\emptyset$, $\theta(y)\in\tau(y)$.
    \end{enumerate}
    Define the policy $\pi(b\mid y) = \delta_{\theta(y)}(b)$.
    Clearly, $\pi\notin\viablePolicies(x)$, since it takes action $a$ in $x$, and thus $\bar \risk(x,\pi)>0$.
    Define now $\tau_{0:\mathrm R} = \{(y,b)\in\stateActionSpace\mid \exists t\in\{0,\dots,\tr(\tau)\}, (y,b) = (x_t,a_t)\}$.
    Crucially, $\pi$ only explores state-action pairs that are either in $\viableSet$ or in $\tau_{0:\mathrm R}$ when initialized in $\viabilityKernel$.
    We thus obtain (almost-surely)\begin{align*}
        \bar\risk(x,\pi) 
            &= \sum_{t=0}^\infty \gamma^tc(X(t;x,\pi),A(t;x,\pi))\\
            &= \sum_{t=0}^\infty \sum_{(y,b)\in\tau_{0:\mathrm R}}\gamma^tc(y,b)\cdot\delta_y(X(t;x,\pi))\cdot\delta_b(A(t;x,\pi))\\
            &= \sum_{(y,b)\in\tau_{0:\mathrm R}}c(y,b)\cdot\sum_{t=0}^\infty\gamma^t\cdot\delta_y(X(t;x,\pi))\cdot\delta_b(A(t;x,\pi)),
    \end{align*}
    where we leveraged the fact that $c_{\mid\viableSet} = 0$.
    Therefore, there must be $(y,b)\in\tau_{0:\mathrm R}$ such that $c(y,b) > 0$, concluding the proof.
\end{proof}
\begin{corollary}\label{clry:dynamic indicator on unviable set}
    Let $c$ be a dynamic indicator of $\constraintSet$, $\risk$ the associated discounted risk \eqref{eq:risk}, and $q\in\criticalSet$.
    For any $q\in\criticalSet$, $c(q) + \gamma\min_{\pi\in\policies}\bar \risk(f(q),\pi)>0$.
\end{corollary}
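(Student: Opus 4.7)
The plan is to invoke Lemma~\ref{lemma:characterization dynamic indicator} on a trajectory obtained by prepending the step $q$ to an optimal trajectory out of $f(q)$. Since $q = (x,a) \in \criticalSet$, we have $x \in \viabilityKernel$ while $f(q) \notin \viabilityKernel$. The state and action spaces being finite, the dynamics deterministic, and $c$ non-negative, standard dynamic programming for discounted Markov decision processes produces a deterministic stationary policy $\pi^\star$ attaining $\bar\risk(f(q),\pi^\star) = \min_{\pi\in\policies}\bar\risk(f(q),\pi)$.

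Next, I would form the deterministic trajectory $\tau^\star = ((y_t,b_t))_{t\in\N}$ defined by $y_0 = f(q)$, $b_t = \pi^\star(y_t)$, and $y_{t+1} = f(y_t,b_t)$, and then prepend $(x,a)$ to obtain $\tau = ((x_t,a_t))_{t\in\N}$ with $(x_0,a_0) = (x,a)$ and $(x_{t+1},a_{t+1}) = (y_t,b_t)$ for all $t \geq 0$. Because $f(q) \notin \viabilityKernel$, any (in particular this deterministic) trajectory from $f(q)$ must reach $\constraintSet$ in finite time, so $\tc(\tau) < \infty$. Combined with $x_0 = x \in \viabilityKernel$, this verifies the hypotheses of Lemma~\ref{lemma:characterization dynamic indicator}, which yields some $t \leq \tr(\tau)$ with $c(x_t,a_t) > 0$.

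A short case analysis then closes the argument. If $t = 0$, then $c(q) > 0$ and, since $\bar\risk \geq 0$, we conclude $c(q) + \gamma \min_{\pi\in\policies}\bar\risk(f(q),\pi) \geq c(q) > 0$. Otherwise $t \geq 1$, and $(x_t,a_t) = (y_{t-1},b_{t-1})$ is a step of $\tau^\star$, so $\bar\risk(f(q),\pi^\star) \geq \gamma^{t-1} c(y_{t-1},b_{t-1}) > 0$ and the conclusion follows. The only subtlety is the appeal to a deterministic stationary minimizer; alternatively, one may pick any $\pi^\star$ achieving the minimum by compactness of the set of stochastic policies and run the same argument on any positive-probability trajectory from $f(q)$ that reaches $\constraintSet$ (which exists by the very definition of $\viabilityKernel$).
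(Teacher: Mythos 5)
Your proof is correct and follows essentially the same route as the paper: take a deterministic minimizer of the risk from $f(q)$, prepend $q$ to its trajectory, apply Lemma~\ref{lemma:characterization dynamic indicator} to get some $t\leq\tr(\tau)$ with $c(x_t,a_t)>0$, and conclude. Your explicit case split on $t=0$ versus $t\geq1$ and the bookkeeping of the $\gamma$ factor is in fact slightly more careful than the paper's one-line conclusion, but it is the same argument.
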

\begin{proof}
    From classical results on dynamic programming, there exists $\eta\in\policies$ such that $\bar\risk(f(q),\eta) = \min_{\pi\in\policies}\bar\risk(f(q),\pi)$.
    Furthermore, $\eta$ can be chosen deterministic.
    Consider then the trajectory of $\eta$ starting from $f(q)$, and prepend $q$ to it.
    Let us call $\tau$ the resulting trajectory.
    It begins in $\viabilityKernel$ and $\tc(\tau)<\infty$, since $q\in\criticalSet$.
    Therefore, by Lemma~\ref{lemma:characterization dynamic indicator}, there exists $t\leq\tr(\tau)$ such that $c(x_t,a_t)>0$, where $\tau = [(x_u,a_u)]_{u\in\N}$.
    The result follows from $c(q) + \bar\risk(f(q),\eta) \geq \gamma^t c(x_t,a_t)>0$.
\end{proof}
The following lemma is the core of the proof of Theorem~\ref{thm:approximation RCOCP}: it upper-bounds the soft-$Q$-value on $\criticalSet$, and lower-bounds it on $\viableSet$. It is a generalization to the present setting of \citep[Lemma\,2]{MHST2022}.
\begin{lemma}\label{lemma:bounds penalized q value}
    There exists constants $v_1, u_1, u_2\in\R$ and $u_3\in\Rp$ such that, for all $\alpha\in\Rp$ and $p\in\Rnn$, \begin{align}
        \min_{\viableSet}\penalizedQValue{\alpha}{p} &\geq v_1, \label{eq:lower bound viable set}\\
        \max_{\criticalSet}\penalizedQValue{\alpha}{p} &\leq u_1 + \alpha\cdot u_2 - p\cdot u_3. \label{eq:upper bound unviable set}
    \end{align}
\end{lemma}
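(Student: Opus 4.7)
The plan is to handle the two bounds independently by exploiting the variational (max-over-policies) characterization of the soft-$Q$-value of \eqref{eq:penalized OCP}, namely
\begin{equation*}
    \penalizedQValue{\alpha}{p}(x,a) = \max_{\pi\in\policies}\bigl[\bar\return(x,\pi) + \alpha\bar\discountedEntropy(x,\pi) - p\bar\risk(x,\pi)\bigr],
\end{equation*}
where the maximum is understood to range over policies whose first action at $x$ is $a$. The lower bound on $\viableSet$ is obtained by exhibiting a concrete good policy; the upper bound on $\criticalSet$ is obtained by splitting the max into three separate maxima and invoking Corollary~\ref{clry:dynamic indicator on unviable set} to extract a positive penalty coefficient.

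For \eqref{eq:lower bound viable set}, fix $(x,a)\in\viableSet$. By definition of the viable set, there exists $\pi\in\viablePolicies$ with $\pi(a\mid x)=1$, and by Lemma~\ref{lemma:characterization dynamic indicator} we have $c\equiv 0$ on $\viableSet$, hence $\bar\risk(x,\pi)=0$. Since $\bar\discountedEntropy\geq 0$ and $|\bar\return|\leq \lVert r\rVert_\infty/(1-\gamma)$, the variational characterization gives
\begin{equation*}
    \penalizedQValue{\alpha}{p}(x,a)\;\geq\;\bar\return(x,\pi)\;\geq\;-\frac{\lVert r\rVert_\infty}{1-\gamma}.
\end{equation*}
Setting $v_1=-\lVert r\rVert_\infty/(1-\gamma)$ proves \eqref{eq:lower bound viable set}, independently of $\alpha$ and $p$.

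For \eqref{eq:upper bound unviable set}, fix $(x,a)\in\criticalSet$ and decompose via the Bellman identity
\begin{equation*}
    \penalizedQValue{\alpha}{p}(x,a) = r(x,a) - p\,c(x,a) + \gamma\max_{\pi\in\policies}\bigl[\bar\return(f(x,a),\pi) + \alpha\bar\discountedEntropy(f(x,a),\pi) - p\bar\risk(f(x,a),\pi)\bigr].
\end{equation*}
Using the fact that a max of a sum is at most the sum of maxes, together with the trivial bounds $\max_\pi \bar\return\leq \lVert r\rVert_\infty/(1-\gamma)$, $\max_\pi \bar\discountedEntropy\leq \ln|\actionSpace|/(1-\gamma)$, and $-p\bar\risk\leq -p\min_\pi \bar\risk$, I regroup the two $p$-terms:
\begin{equation*}
    \penalizedQValue{\alpha}{p}(x,a) \leq \frac{\lVert r\rVert_\infty}{1-\gamma} + \alpha\,\frac{\gamma\ln|\actionSpace|}{1-\gamma} - p\bigl[c(x,a) + \gamma\min_{\pi\in\policies}\bar\risk(f(x,a),\pi)\bigr].
\end{equation*}
By Corollary~\ref{clry:dynamic indicator on unviable set}, the bracketed quantity is strictly positive for each $(x,a)\in\criticalSet$; since $\criticalSet$ is finite (as $\stateActionSpace$ is), taking the minimum over $\criticalSet$ yields a uniform constant $u_3>0$, and \eqref{eq:upper bound unviable set} follows with $u_1=\lVert r\rVert_\infty/(1-\gamma)$ and $u_2=\gamma\ln|\actionSpace|/(1-\gamma)$.

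The delicate step, and the reason this proof cannot be done term-by-term on $(x,a)$ alone, is that neither $c(x,a)$ nor $\min_\pi \bar\risk(f(x,a),\pi)$ needs to be strictly positive in isolation for $(x,a)\in\criticalSet$: a dynamic indicator may, for example, only charge cost upon actually entering $\constraintSet$, leaving $c(x,a)=0$ at critical pairs. The two contributions must be combined, and Corollary~\ref{clry:dynamic indicator on unviable set} is precisely the tool that makes this combination strictly positive. Once that strict positivity is upgraded to a uniform lower bound via the finiteness of $\criticalSet$, the rest amounts to elementary bookkeeping, and the constants $v_1, u_1, u_2, u_3$ are manifestly independent of $\alpha$ and $p$, as required.
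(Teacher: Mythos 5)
Your proof is correct and follows essentially the same route as the paper's: the lower bound on $\viableSet$ comes from evaluating a safe policy that deterministically takes $a$ at $x$ (the paper phrases this via soft policy evaluation of a deterministic viable policy), and the upper bound on $\criticalSet$ comes from the max-over-policies characterization, grouping $c(x,a)+\gamma\min_{\pi\in\policies}\bar\risk(f(x,a),\pi)$ and invoking Corollary~\ref{clry:dynamic indicator on unviable set} together with finiteness of $\criticalSet$ to obtain a uniform $u_3>0$. The only cosmetic caveat is that your opening display is in general an inequality rather than an equality when the maximum is restricted to stationary policies fixing action $a$ at $x$ (states may be revisited), but that is precisely the direction you use for the lower bound, and your entropy constant $\ln\lvert\actionSpace\rvert/(1-\gamma)$ is in fact slightly tighter than the paper's.
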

\begin{proof}
    By \citep[Theorem\,1]{HZHT2018}, we have in particular for all $\alpha\in\Rnn$ and $p\in\Rnn$ that $\penalizedQValue{\alpha}{p}\geq Q^\pi_{\alpha,p}$, where $Q^\pi_{\alpha,p}$ is the soft $Q$-value function of an arbitrary policy $\pi\in\policies$, that is, the only fixed point of the operator $B^\pi_{\alpha,p}$ defined for all $h\in\R^\stateActionSpace$ as \begin{equation*}
        (B^\pi_{\alpha,p}h)(x,a) = r(x,a) + \gamma \E[h(x^\prime, A_1) - \ln \pi(A_1\mid x^\prime)],
    \end{equation*}
    for all $(x,a)\in\stateActionSpace$ and where we defined the shorthand $x^\prime = f(x,a)$.
    Take for $\pi$ a deterministic policy.
    Then, $B^\pi_{\alpha,p}$ simplifies to the classical Bellman operator, and thus the soft-$Q$-value function coincides with the classical $Q$-value function $Q^\pi_p$.
    Therefore, $\penalizedQValue{\alpha}{p} \geq Q^\pi_{p} $ for any $\pi\in\policies$ deterministic.
    This holds in particular if $\pi\in\viablePolicies$, for which \begin{align*}
        Q^\pi_p(x,a) &= r(x,a) - p\cdot c(x,a) + \gamma (\return(x^\prime,\pi) - p\risk(x^\prime,\pi))\\
            &= r(x,a) + \gamma\return(x^\prime,\pi),
    \end{align*}
    for all $(x,a)\in\viableSet$ and with $x^\prime = f(x,a)$.
    Indeed, $c(x,a) = 0$ by Lemma~\ref{lemma:characterization dynamic indicator} since $(x,a)\in\viableSet$, and $\risk(x^\prime,\pi) = 0$ since $\pi\in\viablePolicies(x^\prime)$.
    Since the right-hand side is lower-bounded (by boundedness of $r$) by a constant independent of $p$ and $\alpha$, we deduce the existence of $v_1$ as announced.

    In contrast, from \citep[Theorem\,16]{NNXS2017}, it follows that for all $(x,a)\in\criticalSet$ \begin{align*}
        \penalizedQValue{\alpha}{p}(x,a) 
            &= \max_{\pi\in\policies} r(x,a) + \gamma \bar \return(x^\prime,\pi) + \alpha\gamma \discountedEntropy(x^\prime,\pi) - p c(x,a) - p\gamma\bar\risk(x^\prime,\pi)\\
            &\leq \max_{\pi\in\policies} \frac{1}{1-\gamma} (\sup_\stateActionSpace r + \alpha\gamma\lvert \actionSpace\rvert \ln\lvert\actionSpace\rvert) - p c(x,a) - p \gamma\bar\risk(x^\prime,\pi)\\
            &= u_1 + \alpha\cdot u_2  - p\cdot\left(c(x,a) + \gamma\min_{\pi\in\policies}\bar\risk(x^\prime,\pi)\right),
    \end{align*}
    with $x^\prime = f(x,a)$.
    Here, we have defined $u_1 = \frac{1}{1-\gamma} \sup_\stateActionSpace r$ and $u_2 = \frac{1}{1-\gamma} \lvert \actionSpace\rvert \ln\lvert\actionSpace\rvert$.
    Furthermore, let $u_3 = \min_{q\in\criticalSet}\left[c(q) + \min_{\pi\in\policies} \bar \risk(f(q),\pi)\right]$, which is positive by Corollary~\ref{clry:dynamic indicator on unviable set} and since $\criticalSet$ is finite.
    This yields the desired upper bound for all $(x,a)\in\criticalSet$.
    Since the constants are independent of $\alpha$ and $p$, this concludes the proof.
\end{proof}
This enables showing convergence of the unconstrained, penalized soft-$Q$-value to its constrained counterpart.
\begin{lemma}\label{lemma:convergence q value}
    For all $\alpha>0$, we have \begin{align*}
        \sup_{\criticalSet}\penalizedQValue\alpha p&\goesto{p\to\infty} - \infty,\\
        \sup_{\viableSet}\lvert\penalizedQValue\alpha p - \constrainedQValue\alpha\rvert&\goesto{p\to\infty}0.
    \end{align*}
\end{lemma}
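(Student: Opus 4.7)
The first limit is immediate from Lemma~\ref{lemma:bounds penalized q value}: the upper bound \eqref{eq:upper bound unviable set} gives $\sup_{\criticalSet}\penalizedQValue{\alpha}{p} \leq u_1 + \alpha u_2 - p\, u_3$, which diverges to $-\infty$ as $p\to\infty$ since $u_3 > 0$ and $\alpha$ is fixed.

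For the second limit, my plan is to apply a sequential compactness argument in the spirit of the proof of Theorem~\ref{thm:constrained increasing alpha}. Fix $\alpha > 0$. By Lemma~\ref{lemma:characterization dynamic indicator}, $c$ vanishes on $\viableSet$, so for $(x,a) \in \viableSet$, with $x^\prime = f(x,a) \in \viabilityKernel$, the optimal soft-Bellman equation reads
\begin{equation*}
\penalizedQValue{\alpha}{p}(x,a) = r(x,a) + \gamma\alpha\ln\!\left[\sum_{b\in\viableSet[x^\prime]} e^{\frac{1}{\alpha}\penalizedQValue{\alpha}{p}(x^\prime,b)} + \sum_{b\in\criticalSet[x^\prime]} e^{\frac{1}{\alpha}\penalizedQValue{\alpha}{p}(x^\prime,b)}\right].
\end{equation*}
By \eqref{eq:upper bound unviable set}, the critical-action sum is bounded by $\lvert\actionSpace\rvert \exp\!\left(\frac{u_1 + \alpha u_2 - p\, u_3}{\alpha}\right)$, which tends to $0$ as $p\to\infty$; meanwhile, the viable-action sum is bounded below by $\exp(v_1/\alpha) > 0$ by \eqref{eq:lower bound viable set}. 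Therefore, as $p\to\infty$, the penalized soft-Bellman equation on $\viableSet$ converges pointwise to the constrained soft-Bellman equation of which $\constrainedQValue{\alpha}$ is the unique fixed point.

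To promote this into convergence of fixed points, take any sequence $p_n\to\infty$ and denote by $R_n$ the restriction of $\penalizedQValue{\alpha}{p_n}$ to $\viableSet$. The family $(R_n)$ is uniformly bounded: from below by $v_1$ via \eqref{eq:lower bound viable set}, and from above by $u_1 + \alpha u_2$ obtained from the same estimate as in the proof of Lemma~\ref{lemma:bounds penalized q value}, which is independent of $p$ since $c, \bar\risk \geq 0$. Hence some subsequence of $(R_n)$ converges to a limit $R$. Passing to the limit in the displayed Bellman equation, using continuity of the log-sum-exp and the vanishing of the critical-action contribution, shows that $R$ is a fixed point of the constrained soft-Bellman operator on $\viableSet$, and thus coincides with the restriction of $\constrainedQValue{\alpha}$ by uniqueness. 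The sequential characterization of convergence then yields the claim.

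The main obstacle is guaranteeing that the vanishing of the critical-action contribution is not amplified through the enclosing logarithm. This is precisely where Lemma~\ref{lemma:bounds penalized q value} is decisive: the uniform lower bound \eqref{eq:lower bound viable set} keeps the argument of the log bounded away from $0$, so the critical terms produce only an additive $o(1)$ perturbation inside the log, that vanishes uniformly on $\viableSet$. Beyond that, the argument is a continuity-and-uniqueness transfer modelled directly on the proof of Theorem~\ref{thm:constrained increasing alpha}.
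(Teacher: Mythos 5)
Your argument is correct, and it differs from the paper's proof only in how the existence of the limit on $\viableSet$ is established; the identification of that limit is the same in both. The paper observes, from the policy-max representation of the soft $Q$-value (\citep[Theorem\,16]{NNXS2017}), that $p\mapsto\penalizedQValue{\alpha}{p}(x,a)$ is \emph{nonincreasing} (the penalty terms $-pc-p\gamma\bar\risk$ are nonincreasing in $p$), and combines this with the lower bound \eqref{eq:lower bound viable set} to get monotone pointwise convergence to some $\bar Q$ on $\viableSet$ -- no compactness or subsequence extraction needed, and no upper bound on $\viableSet$ required. You instead establish a two-sided uniform bound on $\viableSet$ (your upper bound $u_1+\alpha u_2$, valid since $c,\bar\risk\geq0$, is correct) and run the Bolzano--Weierstrass argument from the proof of Theorem~\ref{thm:constrained increasing alpha}. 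That is a perfectly valid alternative; its only cost is that the final step needs to be stated a bit more carefully than ``the sequential characterization yields the claim'': extracting one convergent subsequence is not enough, you must note that \emph{every} accumulation point of the bounded family is a fixed point of the constrained soft-Bellman operator and hence equals $\constrainedQValue{\alpha}$, so the bounded sequence has a unique accumulation point and therefore converges (exactly as spelled out in the proof of Theorem~\ref{thm:constrained increasing alpha}). Both routes then coincide: pass to the limit in the soft-Bellman equation, use the uniform divergence to $-\infty$ on $\criticalSet$ to kill the critical-action terms inside the log-sum-exp (your remark that the viable-action sum stays bounded away from zero, so the log is not amplified, is the right justification and is implicit in the paper too), and invoke uniqueness of the constrained fixed point. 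Finally, since $\viableSet$ is finite, pointwise convergence upgrades to the uniform statement claimed in the lemma in either approach.
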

\begin{proof}
    The first claim follows immediately from Lemma~\ref{lemma:bounds penalized q value}.
    For the second one, recall from \citet[Theorem\,16]{NNXS2017} that for all $(x,a)\in\stateActionSpace$ \begin{equation*}
        \penalizedQValue{\alpha}{p}(x,a) 
            = \max_{\pi\in\policies} r(x,a) + \gamma \bar \return(x^\prime,\pi) + \alpha\gamma \discountedEntropy(x^\prime,\pi) - p c(x,a) - p\gamma\bar\risk(x^\prime,\pi).
    \end{equation*}
    It follows that the function $p\mapsto \penalizedQValue\alpha p(x,a)$ is nonincreasing.
    If $(x,a)\in\viableSet$, it is also lower-bounded by Lemma~\ref{lemma:bounds penalized q value}.
    Therefore, there exists $\bar Q:\viableSet\to\R$ such that $\lim_{p\to\infty}\penalizedQValue\alpha p = \bar Q$, pointwise on $\viableSet$ (and, thus, uniformly as well).
    We show $\bar Q = \constrainedQValue\alpha$.

    Let $(x,a)\in\viableSet$ and $x^\prime = f(x,a)$.
    For any $p\in\Rnn$, the definition of $\penalizedQValue\alpha p$ as the unique fixed point\,\citep{NNXS2017} of the soft-Bellman operator associated to the reward $r - pc$ gives \begin{equation*}
        \penalizedQValue\alpha p(x,a) = r(x,a) + \alpha\gamma\ln\left[\sum_{b\in\actionSpace}\exp\left[\frac1\alpha\penalizedQValue\alpha p(x^\prime,b)\right]\right],
    \end{equation*}
    recalling that $c(x,a) = 0$.
    Taking the limit as $p\to\infty$ on both sides yields \begin{align*}
        \bar Q(x,a) 
            &= r(x,a) + \alpha\gamma\ln\left[\sum_{b\in\actionSpace}\exp\left[\frac1\alpha\lim_{p\to\infty}\penalizedQValue\alpha p(x^\prime,b)\right]\right]\\
            &= r(x,a) + \alpha\gamma\ln\left[\sum_{b\in\viableSet[x^\prime]}\exp\left[\frac1\alpha\bar Q(x^\prime,b)\right] + \sum_{b\in\criticalSet[x^\prime]}0\right]\\
            &= r(x,a) + \alpha\gamma\ln\left[\sum_{b\in\viableSet[x^\prime]}\exp\left[\frac1\alpha\bar Q(x^\prime,b)\right]\right],
    \end{align*}
    where we used the fact that $\lim_{p\to\infty}\penalizedQValue\alpha p(x,b)=-\infty$ for all $b\in\criticalSet[x^\prime]$.
    In other words, $\bar Q$ is a fixed point of the same Bellman operator as $\constrainedQValue\alpha$. Since that fixed point is unique and equal to $\constrainedQValue\alpha$, this concludes the proof.
\end{proof}
\begin{proof}[Proof of Theorem~\ref{thm:approximation RCOCP}]
    Let $\delta, \epsilon,$ and $\alpha$ be in $\Rp$.
    Let $p_1 = \frac{u_1-v_1}{u_3}+\frac{u_2 -\ln\delta}{u_3}\alpha$ and $p>p_1$.
    By Lemma~\ref{lemma:bounds penalized q value}, we have \begin{align*}
        \penalizedPolicy{\alpha}{p}(a\mid x) 
            &= \frac{
                \exp\left[\frac1\alpha \penalizedQValue{\alpha}{p}(x,a)\right]
            }{
                \sum_{b\in\actionSpace} \exp\left[\frac1\alpha Q(x,b)\right]
            }\\
            &\leq \exp\left[\frac{u_1-v_1}{\alpha} + u_2 - \frac{u_3}\alpha p\right]\\
            &< \delta,
    \end{align*}
    for all $(x,a)\in\criticalSet$.
    
    Next, it follows from Lemma~\ref{lemma:convergence q value} that $\exp\left[\frac1\alpha\penalizedQValue\alpha p(x,a)\right]$ converges to $\exp\left[\frac1\alpha\constrainedQValue\alpha(x,a)\right]$ if $(x,a)\in\viableSet$, and to $0$ if $(x,a)\in\criticalSet$.
    It follows immediately that there exists $p_2\in\Rnn$ such that, for all $(x,a)\in\viableSet$ and $p>p_2$, \begin{equation*}
        \left\lvert\softmax\left[\frac1\alpha \penalizedQValue\alpha p(x,\cdot)\right](a) - \softmax\left[\frac1\alpha \constrainedQValue\alpha(x,\cdot)\right](a)\right\rvert<\epsilon.
    \end{equation*}
    The result follows by taking $\optimalPenalty = \max\{p_1,p_2\}$.
\end{proof}

\fi
\end{document}